\title{Fall of Empires: Breaking Byzantine-tolerant SGD by Inner Product Manipulation}
\author{} 
\author{ {\bf Cong Xie} \\
Computer Science Dept. \\
UIUC \\
\And
{\bf Sanmi Koyejo}  \\
Computer Science Dept. \\
UIUC \\
\And
{\bf Indranil Gupta}   \\
Computer Science Dept. \\
UIUC \\
}
\begin{document}

\def\Blue{\color{blue}}
\def\Purple{\color{purple}}

\def\A{{\bf A}}
\def\a{{\bf a}}
\def\B{{\bf B}}
\def\C{{\bf C}}
\def\c{{\bf c}}
\def\D{{\bf D}}
\def\d{{\bf d}}
\def\E{{\mathbb{E}}}
\def\F{{\bf F}}
\def\e{{\bf e}}
\def\f{{\bf f}}
\def\G{{\bf G}}
\def\H{{\bf H}}
\def\I{{\bf I}}
\def\K{{\bf K}}
\def\L{{\bf L}}
\def\M{{\bf M}}
\def\m{{\bf m}}
\def\N{{\bf N}}
\def\n{{\bf n}}
\def\Q{{\bf Q}}
\def\q{{\bf q}}
\def\R{{\mathbb{R}}}
\def\S{{\bf S}}
\def\s{{\bf s}}
\def\T{{\bf T}}
\def\U{{\bf U}}
\def\u{{\bf u}}
\def\V{{\bf V}}
\def\v{{\bf v}}
\def\W{{\bf W}}
\def\w{{\bf w}}
\def\X{{\bf X}}
\def\x{{\bf x}}
\def\bx{{\bar{x}}}
\def\Y{{\bf Y}}
\def\y{{\bf y}}
\def\Z{{\bf Z}}
\def\z{{\bf z}}
\def\0{{\bf 0}}
\def\1{{\bf 1}}

\def\AM{{\mathcal A}}
\def\CM{{\mathcal C}}
\def\DM{{\mathcal D}}
\def\GM{{\mathcal G}}
\def\FM{{\mathcal F}}
\def\IM{{\mathcal I}}
\def\NM{{\mathcal N}}
\def\OM{{\mathcal O}}
\def\SM{{\mathcal S}}
\def\TM{{\mathcal T}}
\def\UM{{\mathcal U}}
\def\XM{{\mathcal X}}
\def\YM{{\mathcal Y}}
\def\RB{{\mathbb R}}

\def\TX{\tilde{\bf X}}
\def\tx{\tilde{\bf x}}
\def\ty{\tilde{\bf y}}
\def\TZ{\tilde{\bf Z}}
\def\tz{\tilde{\bf z}}
\def\hd{\hat{d}}
\def\HD{\hat{\bf D}}
\def\hx{\hat{\bf x}}
\def\TD{\tilde{\Delta}}
\def\tg{\tilde{g}}
\def\tmu{\tilde{\mu}}

\def\alp{\mbox{\boldmath$\alpha$\unboldmath}}
\def\bet{\mbox{\boldmath$\beta$\unboldmath}}
\def\epsi{\mbox{\boldmath$\epsilon$\unboldmath}}
\def\etab{\mbox{\boldmath$\eta$\unboldmath}}
\def\ph{\mbox{\boldmath$\phi$\unboldmath}}
\def\pii{\mbox{\boldmath$\pi$\unboldmath}}
\def\Ph{\mbox{\boldmath$\Phi$\unboldmath}}
\def\Ps{\mbox{\boldmath$\Psi$\unboldmath}}
\def\tha{\mbox{\boldmath$\theta$\unboldmath}}
\def\Tha{\mbox{\boldmath$\Theta$\unboldmath}}
\def\muu{\mbox{\boldmath$\mu$\unboldmath}}
\def\Si{\mbox{\boldmath$\Sigma$\unboldmath}}
\def\si{\mbox{\boldmath$\sigma$\unboldmath}}
\def\Gam{\mbox{\boldmath$\Gamma$\unboldmath}}
\def\Lam{\mbox{\boldmath$\Lambda$\unboldmath}}
\def\De{\mbox{\boldmath$\Delta$\unboldmath}}
\def\Ome{\mbox{\boldmath$\Omega$\unboldmath}}
\def\TOme{\mbox{\boldmath$\hat{\Omega}$\unboldmath}}
\def\vps{\mbox{\boldmath$\varepsilon$\unboldmath}}
\newcommand{\ti}[1]{\tilde{#1}}
\def\Ncal{\mathcal{N}}
\def\argmax{\mathop{\rm argmax}}
\def\argmin{\mathop{\rm argmin}}
\providecommand{\abs}[1]{\lvert#1\rvert}
\providecommand{\norm}[2]{\lVert#1\rVert_{#2}}

\def\Zs{{\Z_{\mathrm{S}}}}
\def\Zl{{\Z_{\mathrm{L}}}}
\def\Yr{{\Y_{\mathrm{R}}}}
\def\Yg{{\Y_{\mathrm{G}}}}
\def\Yb{{\Y_{\mathrm{B}}}}
\def\Ar{{\A_{\mathrm{R}}}}
\def\Ag{{\A_{\mathrm{G}}}}
\def\Ab{{\A_{\mathrm{B}}}}
\def\As{{\A_{\mathrm{S}}}}
\def\Asr{{\A_{\mathrm{S}_{\mathrm{R}}}}}
\def\Asg{{\A_{\mathrm{S}_{\mathrm{G}}}}}
\def\Asb{{\A_{\mathrm{S}_{\mathrm{B}}}}}
\def\Or{{\Ome_{\mathrm{R}}}}
\def\Og{{\Ome_{\mathrm{G}}}}
\def\Ob{{\Ome_{\mathrm{B}}}}

\def\Expect{\mathbb{E}}

\def\vec{\mathrm{vec}}
\def\fold{\mathrm{fold}}
\def\index{\mathrm{index}}
\def\sgn{\mathrm{sgn}}
\def\tr{\mathrm{tr}}
\def\rk{\mathrm{rank}}
\def\diag{\mathsf{diag}}
\def\const{\mathrm{Const}}
\def\dg{\mathsf{dg}}
\def\st{\mathsf{s.t.}}
\def\vect{\mathsf{vec}}
\def\MCAR{\mathrm{MCAR}}
\def\MSAR{\mathrm{MSAR}}
\def\etal{{\em et al.\/}\,}
\def\prox{\mathrm{prox}^h_\gamma}
\newcommand{\indep}{{\;\bot\!\!\!\!\!\!\bot\;}}

\newcommand{\mtrxt}[1]{{#1}^\top}
\newcommand{\mtrx}[4]{\left[\begin{matrix}#1 & #2 \\ #3 & #4\end{matrix}\right]}
\DeclarePairedDelimiter\vnorm{\lVert}{\rVert}
\DeclarePairedDelimiterX{\innerprod}[2]{\langle}{\rangle}{#1, #2}

\def\Lsize{\hbox{\space \raise-2mm\hbox{$\textstyle \L \atop \scriptstyle {m\times 3n}$} \space}}
\def\Ssize{\hbox{\space \raise-2mm\hbox{$\textstyle \S \atop \scriptstyle {m\times 3n}$} \space}}
\def\Osize{\hbox{\space \raise-2mm\hbox{$\textstyle \Ome \atop \scriptstyle {m\times 3n}$} \space}}
\def\Tsize{\hbox{\space \raise-2mm\hbox{$\textstyle \T \atop \scriptstyle {3n\times n}$} \space}}
\def\Bsize{\hbox{\space \raise-2mm\hbox{$\textstyle \B \atop \scriptstyle {m\times n}$} \space}}

\newcommand{\twopartdef}[4]
{
	\left\{
		\begin{array}{ll}
			#1 & \mbox{if } #2 \\
			#3 & \mbox{if } #4
		\end{array}
	\right.
}

\newcommand{\tabincell}[2]{\begin{tabular}{@{}#1@{}}#2\end{tabular}}

\newcommand{\NewProcedure}[1]{\STATE\hspace{-\algorithmicindent} {\large\underline{\textbf{{#1}:}}}
\setcounter{ALC@line}{0}}
\newcommand{\NewThread}[1]{\STATE\hspace{-\algorithmicindent} {\quad\large\underline{\textbf{{#1}:}}} 
\setcounter{ALC@line}{0}}

\DeclarePairedDelimiter\ceil{\lceil}{\rceil}
\DeclarePairedDelimiter\floor{\lfloor}{\rfloor}

\newcommand{\ip}[2]{\left\langle #1, #2 \right \rangle}

\newtheorem{theorem}{Theorem}
\newtheorem{lemma}{Lemma}
\newtheorem{corollary}{Corollary}
\newtheorem{assumption}{Assumption}
\newtheorem{definition}{Definition}
\newtheorem{proposition}{Proposition}
\newtheorem{remark}{Remark}

\newcommand{\pfcomment}[1]{
\tag*{$\triangleright$ #1}
}

\def\aggr{{\tt Aggr}}
\def\mean{{\tt Mean}}
\def\median{{\tt Median}}
\def\krum{{\tt Krum}}
\def\zeno{{\tt Zeno}}

\maketitle

\begin{abstract}
Recently, new defense techniques have been developed to tolerate Byzantine failures for distributed machine learning. The Byzantine model captures  workers that behave arbitrarily, including malicious and compromised workers. In this paper, we  break two prevailing Byzantine-tolerant techniques. Specifically we show robust aggregation methods for synchronous SGD -- coordinate-wise median and Krum -- can be broken using new attack strategies based on inner product manipulation. We prove our results theoretically, as well as show empirical validation.  
\end{abstract}

\section{INTRODUCTION}

The security of distributed machine learning has drawn increasing attention 
in recent years. Among the threat models, Byzantine failures~\citep{Lamport1982TheBG} are perhaps the most well-studied. In the Byzantine model, workers can behave arbitrarily and maliciously. In addition, Byzantine workers are omniscient and can conspire.
Most of the existing Byzantine-tolerant machine-learning algorithms~\citep{blanchard2017machine,Chen2017DistributedSM,yin2018byzantine,feng2014distributed,Su2016FaultTolerantMO,su2016defending,alistarh2018byzantine} focus on the protection of distributed Stochastic Gradient Descent~(SGD).

In this paper, we consider Byzantine-tolerant SGD in a server-worker architecture~(also known as the parameter server architecture~\citep{li2014scaling,li2014communication}), depicted in Figure~\ref{fig:ps}. The  system is composed of  server nodes and worker nodes. In each epoch, the workers pull the latest model from the servers, estimate the gradients using the locally sampled training data, and then push the gradient estimators to the servers. The servers aggregate the gradient estimators, and update the model by using the aggregated gradients.

\begin{figure}[tb!]
\centering
\includegraphics[width=0.4\textwidth,height=3.4cm]{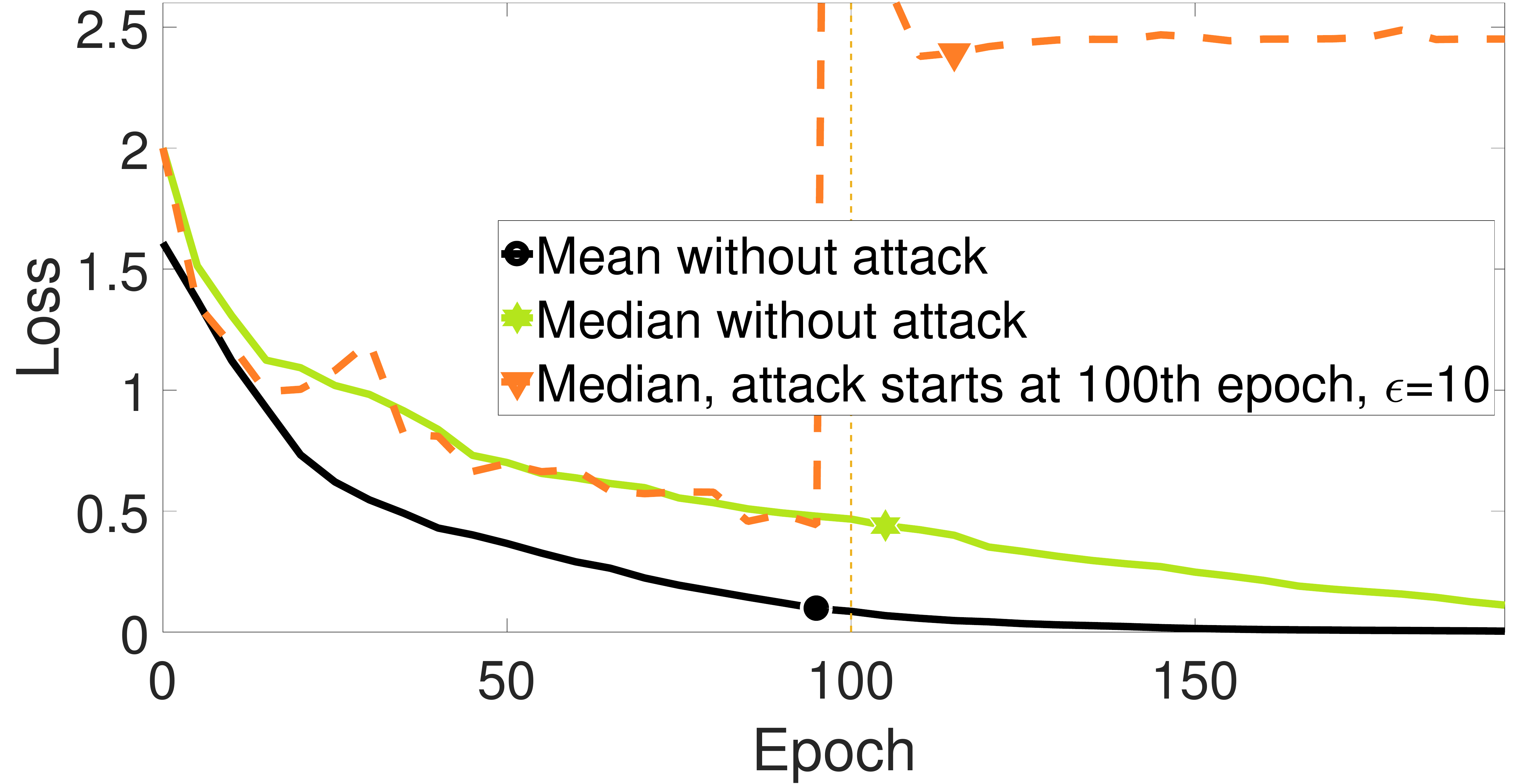}
\vskip 0.2cm
\includegraphics[width=0.4\textwidth,height=3.4cm]{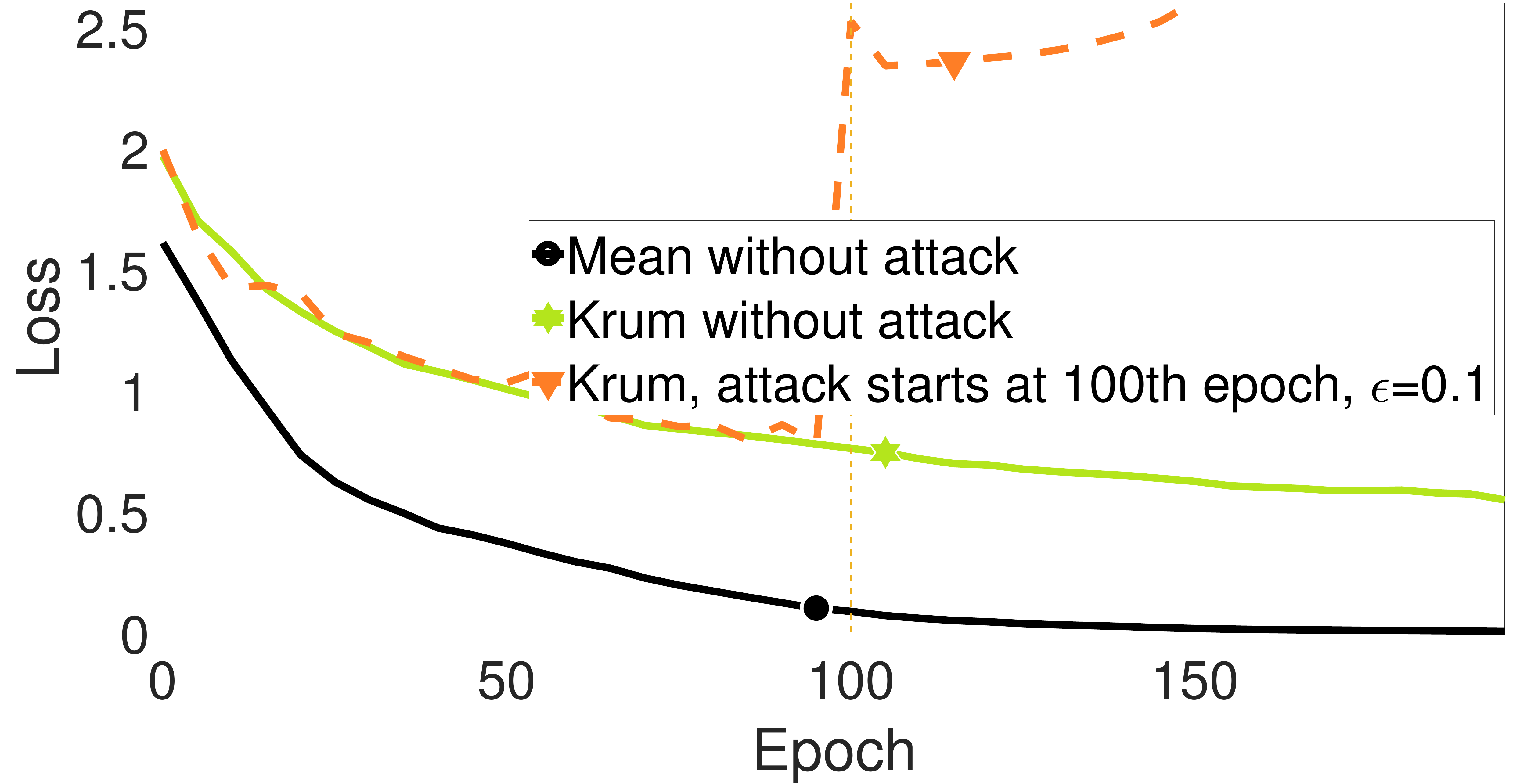}
\caption{Illustration of failed Byzantine-tolerant SGD. We execute distributed synchronous SGD on CIFAR-10 image classification, with 25 workers. Beginning from the 100th epoch, we attack the a system by replacing some workers with Byzantine workers. During the attack, 12 workers are Byzantine in the case of coordinate-wise median, and 11 workers are Byzantine in the case of Krum. The Byzantine workers push $- \epsilon g$ to the server, where $g$ is the true gradient.} 
\label{fig:observation}
\end{figure}

We consider  Byzantine failures at a subset of the worker nodes. 
Byzantine workers send arbitrary values instead of the gradient estimators to the server. Such Byzantine gradients are potentially adversarial, and this can result in convergence to sub-optimum models, or even lead to divergence. To make things worse, the Byzantine workers can spy on the information at any server or at any honest worker (omniscience). Byzantine gradients can thus be tailored to have similar variance and magnitude as the correct gradients, which makes them hard to distinguish. Additionally, in different iterations, different subsets of workers can behave in a  Byzantine manner, evading detection. Existing literature assumes that less than half of the workers are Byzantine in any iteration.

Compared to  traditional Byzantine tolerance in distributed systems~\citep{lynch1996distributed,avizienis2004basic,tanenbaum2007distributed,fischer1982impossibility}, Byzantine tolerance in distributed machine learning has unique properties and challenges. Traditional Byzantine tolerance attempts to reach consensus on correct values. However, machine learning algorithms do not need to reach consensus. Further, even non-Byzantine-tolerant machine learning algorithms can naturally tolerate some noise in the input and during execution~\citep{xing2016strategies}. Thus for distributed SGD, existing techniques for Byzantine-tolerant execution guarantee an upper-bound on the distance between the aggregated approximate gradient (under Byzantine workers) and the true gradient~\citep{blanchard2017machine,yin2018byzantine}. 


A deeper introspection reveals, however, that what really matters for gradient descent algorithms is the direction of the descent. As shown in Figure~\ref{fig:descent}, to let the gradient descent algorithm make progress, we need to guarantee that the direction of the aggregated vector agrees with the true gradient, i.e., the inner product between the aggregated vector and the true gradient must be non-negative. This can be violated by an attack that makes the inner product negative. 
We call this class of new attacks ``inner product manipulation attacks''.

\begin{figure}[htb!]
\centering
\includegraphics[width=0.28\textwidth]{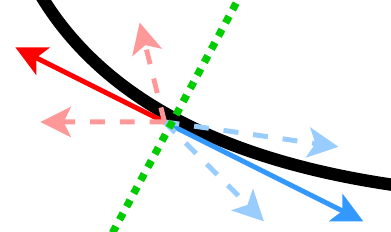}
\caption{Descent Direction. The blue arrows represent the directions which agree with the steepest descent direction (negative gradient). The red arrows represent directions which agree with the steepest ascent direction (gradient).}
\label{fig:descent}
\end{figure}

We observe that the bounded distance between the aggregated value and the true gradient guaranteed by existing techniques is not enough to defend distributed synchronous SGD against  inner product manipulation attacks. For example, for the coordinate-wise median, if we put all the Byzantine values on the opposite side of the true gradient, 
the inner product between the aggregated vector and the true gradient can be manipulated to be negative. 

In this paper, we study how inner product manipulation makes Byzantine-tolerant SGD vulnerable. We conduct case studies on coordinate-wise median~\citep{yin2018byzantine} and Krum~\citep{blanchard2017machine}.  Figure~\ref{fig:observation} gives a glimpse of how bad the effect of the attack can be. In a nutshell, creating gradients in the opposite direction with large magnitude crashes coordinate-wise median, while creating gradients in the opposite direction with small magnitude crashes Krum. We provide theoretical analysis as well as empirical results to validate these findings. 

Based on these results, we argue that there is a need to revise the definition of Byzantine tolerance in distributed SGD.  
We provide a new definition, and study its satisfaction on two prevailing robust distributed SGD algorithms, theoretically and empirically.
In summary, our contributions are:

\setitemize[0]{leftmargin=*}
\begin{itemize} 
\item We break two prevailing Byzantine tolerant SGD algorithms -- coordinate-wise median~\citep{yin2018byzantine} and Krum~\citep{blanchard2017machine} -- using a new class of attacks called inner production manipulation attacks. 
We theoretically prove that under certain conditions, we can backdoor these two algorithms, even when the assumptions and theorems presented in these papers are valid. 
\item We show how to design Byzantine gradients to compromise the robust aggregation rules. We conduct experiments to validate further.
\item Following our theoretical and empirical analysis, we propose a revised definition of Byzantine tolerance for distributed SGD. 
\end{itemize}

\section{RELATED WORK}

Robust estimators such as the median are well studied, and can naturally be applied to Byzantine tolerance. Coordinate-wise median is one approach that generalizes the median to high-dimensional vectors.  In \citet{yin2018byzantine}, statistical error rates are studied for using coordinate-wise median in distributed SGD. 

\citet{blanchard2017machine} propose Krum, which is not based on robust statistics. For each candidate gradient, Krum computes the local sum of squared Euclidean distances to the other candidates, and outputs the one with minimal sum.

In this paper, we focus on coordinate-wise median and Krum. There are other Byzantine-tolerant SGD algorithms. For example, Bulyan~\citep{guerraoui2018hidden} is built based on Krum, which potentially shares the same flaws.
DRACO~\citep{chen2018draco} uses coding theory to ensure robustness, and is different from the other Byzantine-tolerant SGD algorithms.

Recently, an increasing number of papers propose attack mechanisms to break the defense of machine learning in various scenarios. For example, \citet{athalye2018obfuscated} propose attack techniques using adversarial training data. \citet{bhagoji2018analyzing,bagdasaryan2018backdoor} break the defense of federated learning~\citep{mcmahan2016communication}. In this paper, we focus on attacking distributed synchronous SGD using adversarial gradients sent by Byzantine workers.

\section{PRELIMINARIES}

In this paper, we focus on distributed synchronous Stochastic Gradient Descent~(SGD) with Parameter Server~(PS). In this section, we formally introduce distributed synchronous SGD and the threat model of Byzantine failures.

\begin{figure}[htb!]
\centering
\includegraphics[width=0.49\textwidth,height=3.8cm]{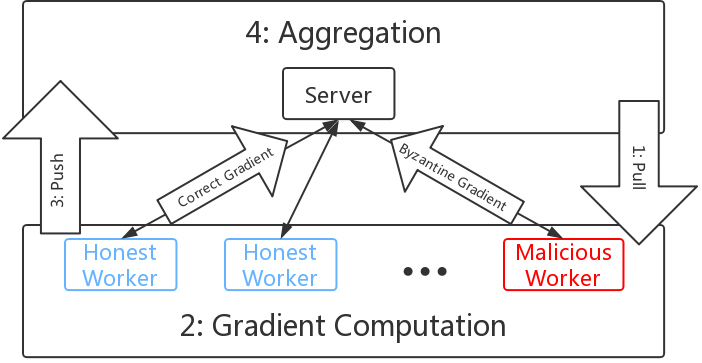}
\caption{Worker-Server Architecture}
\label{fig:ps}
\end{figure}

\subsection{STOCHASTIC GRADIENT DESCENT}

We consider the following optimization problem:
\begin{align*}
\min_{x \in \R^d} F(x), 
\end{align*}
where $F(x) = \E_{z \sim \mathcal{D}}[f(x; z)]$ is a differentiable function, $z$ is sampled from some unknown distribution $\mathcal{D}$, $d$ is the number of dimensions. We assume that there exists at least one minimizer of $F(x)$, which is denoted by $x^*$, where $\nabla F(x^*) = 0$.

This problem is solved in a distributed manner with $m$ workers. In each iteration, each worker will sample $n$ independent and identically distributed~(i.i.d.) data points from the distribution $\mathcal{D}$, and compute the gradient of the local empirical loss $F_i(x) = \frac{1}{n} \sum_{j=1}^n f(x; z^{i,j}), \forall i \in [m]$, where $z^{i,j}$ is the $j$th sampled data on the $i$th worker. The servers will collect and aggregate the gradients sent by the workers, and update the model as follows:
\begin{align*}
x^{t+1} = x^t - \gamma^t \aggr(\{\tilde{v}_i^t: i \in [m]\}),
\end{align*}
where $\aggr(\cdot)$ is an aggregation rule (e.g., averaging), and $\tilde{v}_i^t$ is the gradient sent by the $i$th worker, $\gamma^t$ is the learning rate in the $t^{\mbox{th}}$ iteration.
For an honest worker, 
$
\tilde{v}_i^t = \nabla F_i(x^t)
$
is an unbiased estimator such that $\E[\nabla F_i(x^t)] = \nabla F(x^t)$. 
When all the workers are honest, the most common choice of the aggregation rule $\aggr(\cdot)$ is averaging:
\begin{align*}
    \aggr(\{\tilde{v}_i^t: i \in [m]\}) = \frac{1}{m} \sum_{i \in [m]} \tilde{v}_i^t.
\end{align*}

The detailed algorithm of distributed synchronous SGD with aggregation rule $\aggr(\cdot)$ is shown in Algorithm~\ref{alg:sgd}.

\begin{algorithm}[htb!]
\caption{Distributed Synchronous SGD with Robust Aggregation}
\begin{algorithmic}
\vspace{0.2cm}
\STATE{\large\underline{\textbf{Server}}} 
\STATE $x^0 \leftarrow rand()$ \COMMENT{Initialization}
\FOR{$t = 0, \ldots, T$}
	\STATE Broadcast $x^{t}$ to all the workers
	\STATE Wait until all the gradients $\{\tilde{v}_i^t: i \in [m]\}$ arrive
	\STATE Compute $\bar{\tilde{v}}^t = \aggr(\{\tilde{v}_i^t: i \in [m]\})$ 
	\STATE Update the parameter $x^{t+1} \leftarrow x^{t} - \gamma^t \bar{\tilde{v}}^t$
\ENDFOR
\end{algorithmic}
\begin{algorithmic}
\vspace{0.2cm}
\STATE{\large\underline{\textbf{Worker}} $i = 1, \ldots, m$} 
\FOR{$t = 0, \ldots, T$}
	\STATE Receive $x^{t}$ from the server
	\STATE Draw the samples, compute, and send the gradient $v_i^t = \nabla F_i^t(x^{t})$ to the server
\ENDFOR
\end{algorithmic}
\label{alg:sgd}
\end{algorithm}

\subsection{THREAT MODEL}

In the Byzantine failure model, the gradients sent by malicious workers can take an arbitrary value:
\begin{align}
\label{equ:byz_grad}
\tilde{v}_i^t = 
\begin{cases}
*, & \mbox{if $i$th worker is Byzantine}, \\
\nabla F_i(x^t), & \mbox{otherwise,}
\end{cases}
\end{align}
where ``$*$" represents arbitrary values.

Formally, we define the threat model of Byzantine failure as follows.
\begin{definition} 
\label{def:byz}
(Threat Model~\citep{blanchard2017machine,Chen2017DistributedSM,yin2018byzantine}). In the $t^{\mbox{th}}$ iteration, let $\{v_i^t: i \in [m]\}$ be i.i.d. random vectors in $\R^d$, where $v_i^t = \nabla F_i(x^t)$. The set of correct vectors $\{v_i^t: i \in [m]\}$ is partially replaced by arbitrary vectors, which results in $\{\tilde{v}_i^t: i \in [m]\}$, as defined in Equation~(\ref{equ:byz_grad}). In other words, a correct gradient is $\nabla F_i(x^t)$, while a Byzantine gradient, marked as ``$*$", is assigned arbitrary value.  We assume that $q$ out of $m$ vectors are Byzantine, where $2q < m$. Furthermore, the indices of faulty workers can change across different iterations. If the failures are caused by attackers, the threat model includes the case where the attackers can collude.
\end{definition}



The notations used in this paper is summarized in Table~\ref{tbl:notations}.
\begin{table}[htb]
\vspace{-0.7cm}
\caption{Notations}
\label{tbl:notations}
\begin{center}
\begin{small}
\begin{tabular}{|l|l|}
\hline 
{\bf Notation}  & {\bf Description} \\ \hline
$m$    & Number of workers \\ \hline
$n$    & Minibatch size on each worker \\ \hline
$T$    & Number of iterations  \\ \hline
$[m]$    & Set of integers $\{1, \ldots, m \}$  \\ \hline
$q$    & Number of Byzantine workers  \\ \hline
$\gamma$    & Learning rate  \\ \hline
$x$    & Model parameters  \\ \hline
$\tilde{v}_i^t$    & Gradient sent by the $i$th worker \\
    &  in the $t^{\mbox{th}}$ iteration, potentially Byzantine \\ \hline
$v_i^t$   & Correct gradient produced by the $i$th worker  \\ 
    &  in the $t^{\mbox{th}}$ iteration \\ \hline
$\| \cdot \|$    & All the norms in this paper are $l_2$-norms  \\ \hline
$\ip{a}{b}$    & Inner product between $a$ and $b$  \\ \hline
\end{tabular}
\end{small}
\end{center}
\vspace{-0.7cm}
\end{table}

\section{DEFENSE TECHNIQUES}

In this section, we introduce two prevailing robust aggregation rules against Byzantine failures in distributed synchronous SGD: coordinate-wise median and Krum. For the remainder of this paper, we ignore the iteration superscript $t$ in $\tilde{v}_i^t$ and $v_i^t$ for convenience. 

\subsection{COORDINATE-WISE MEDIAN}
\begin{definition}(Coordinate-wise Median~\citep{yin2018byzantine})
\label{def:marmed}
We define the coordinate-wise median aggregation rule $\median(\cdot)$ as
\begin{align*}
    med = \median(\{\tilde{v}_i: i \in [m]\}),
\end{align*}
where for any $j\in[d]$, the $j$th dimension of $med$ is $med_j = median\left(\{(\tilde{v}_1)_j, \ldots, (\tilde{v}_m)_j\}\right)$, $(\tilde{v}_i)_j$ is the $j$th dimension of the vector $\tilde{v}_i$, $median(\cdot)$ is the one-dimensional median.  
\end{definition}


\subsection{KRUM}

\begin{definition}(Krum~\citep{blanchard2017machine})
\label{def:krum}
\begin{align*}
& \krum(\{\tilde{v}_i: i \in [m]\}) = \tilde{v}_k, \quad
k = \argmin_{i \in [m]} KR(\tilde{v}_i), \\
& KR(\tilde{v}_i) = \sum_{i \rightarrow j} \| \tilde{v}_i - \tilde{v}_j \|^2,
\end{align*}
where $i \rightarrow j$ are the indices of the $m-q-2$ nearest neighbours of $\tilde{v}_i$ in $\{\tilde{v}_j: j \in [m], i \neq j\}$ as measured by squared Euclidean distance.
\end{definition}

For convenience, we refer to the coordinate-wise median and Krum as \texttt{Median} and \texttt{Krum}.

\section{ATTACK TECHNIQUES}

In this section, we revise the definition of Byzantine tolerance in distributed synchronous SGD. Then, we theoretically analyze the Byzantine tolerance of coordinate-wise median and Krum, and show that under certain conditions, these two robust aggregation rules are no longer Byzantine-tolerant.

\subsection{INNER PRODUCT MANIPULATION}

In the previous work on Byzantine-tolerant SGD algorithms, most of the robust aggregation rules only guarantee that the robust estimator is not arbitrarily far away from the mean of the correct gradients. In other words, the distance between the robust estimator and the correct mean is upper-bounded. However, for gradient descent algorithms, to guarantee the descent of the loss, the inner product between the true gradient and the robust estimator must be non-negative: 
\begin{align*}
    \ip{\nabla F(x)}{\aggr(\{\tilde{v}_i: i \in [m]\})} \geq 0,
\end{align*}
so that at least the loss will not increase in expectation. In particular, bounded distance is not enough to guarantee robustness, if the attackers manipulate the Byzantine gradients and make the inner product negative.

The intuition underlying the inner product manipulation attack is that, when gradient descent algorithm converges, the gradient $\nabla F(x^t)$ approaches $0$. Thus, even if the distance between the robust estimator and the correct mean is bounded, it is still possible to manipulate their inner product to be negative, especially when the upper bound of such distance is large. 

We formally define a revised version of Byzantine tolerance for distributed synchronous SGD~(DSSGD-Byzantine tolerance):
\begin{definition} (DSSGD-Byzantine Tolerance)
Without loss of generality, suppose that in a specific iteration, the server receives $(m-q)$ correct gradients $\mathcal{V} = \{v_1, \ldots, v_{m-q}\}$ and $q$ Byzantine gradients $\mathcal{U} = \{u_1, \ldots, u_q\}$. We assume that the correct gradients have the same expectation $\E[v_i] = g, \forall i \in [m-q]$. An aggregation rule $\aggr(\cdot)$ is said to be DSSGD-Byzantine-tolerant if 
\begin{align*}
    \ip{g}{\quad \E\left[ \aggr(\mathcal{V} \cup \mathcal{U}) \right]} \geq 0.
\end{align*}
\end{definition}

With the revised definition, now we theoretically analyze the DSSGD-Byzantine tolerance of coordinate-wise median and Krum.

\begin{remark}
Note that we do not argue that the theoretical guarantees in the previous work are wrong. Instead, our claim is that the theoretical guarantees on the bounded distances are not enough to secure distributed synchronous SGD. In particular, DSSGD-Byzantine tolerance is different from the Byzantine tolerance proposed in previous work. 

\end{remark}

\subsection{COORDINATE-WISE MEDIAN}

The following theorem shows that under certain conditions, \texttt{Median} is not DSSGD-Byzantine-tolerant.
\begin{theorem} \label{thm:median}
We consider the worst case where $m-2q = 1$. The server receives $(m-q)$ correct gradients $\mathcal{V} = \{v_1, \ldots, v_{m-q}\}$ and $q$ Byzantine gradients $\mathcal{U} = \{u_1, \ldots, u_q\}$. We assume that the stochastic gradients have identical expectation $\E[v_i] = g, \forall i \in [m-q]$, and non-zero coordinate-wise variance $\E[ \left( (v_i)_j - g_j \right)^2 ] \geq \sigma^2, \forall i \in [m-q], j \in [d]$, where $(v_i)_j$ is the $j$th coordinate of $v_i$, and $g_j$ is the $j$th coordinate of $g$. When $\max_{j \in [d]} | g_j | < \frac{\sigma}{\sqrt{m-q-1}}$, there exist Byzantine gradients $\mathcal{U} = \{u_1, \ldots, u_q\}$ such that
\begin{align*}
    \ip{g}{\quad \E\left[ \median(\mathcal{V} \cup \mathcal{U}) \right]} < 0.
\end{align*}
\end{theorem}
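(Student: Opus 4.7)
The plan is to exhibit an explicit Byzantine attack and show that, under the stated smallness condition on $g$, it drives the expected inner product strictly negative. Since $m-2q=1$ gives $m=2q+1$, in every coordinate the median is the $(q+1)$th order statistic among $q+1$ honest entries and $q$ attacker-controlled entries. I would use the ``opposite-direction, large-magnitude'' attack: set $(u_k)_j = -B\,\mathrm{sign}(g_j)$ with $B$ larger than any honest coordinate (take $B\to\infty$ for a clean limit). On a coordinate with $g_j>0$ all $q$ Byzantine entries then lie strictly below every honest entry, so the $(q+1)$th order statistic equals the smallest honest value; symmetrically, for $g_j<0$ it equals the largest honest value. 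Writing $med = \median(\mathcal{V}\cup\mathcal{U})$, this yields $med_j = \min_{i\in[m-q]}(v_i)_j$ when $g_j>0$ and $med_j = \max_{i\in[m-q]}(v_i)_j$ when $g_j<0$.

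Next I would reorganise the target inner product. Define the one-sided deviation $\Delta_j \ge 0$ by $\Delta_j := g_j - \E[\min_i(v_i)_j]$ when $g_j>0$ and $\Delta_j := \E[\max_i(v_i)_j] - g_j$ when $g_j<0$. Then
\[
\ip{g}{\E[med]} \;=\; \sum_j g_j\,\E[med_j] \;=\; \|g\|^2 - \sum_j |g_j|\,\Delta_j,
\]
which is strictly negative whenever $\Delta_j > |g_j|$ for every $j$ with $g_j\ne 0$, since that yields $\sum_j |g_j|\Delta_j > \sum_j g_j^2 = \|g\|^2$. Under the hypothesis $\max_j|g_j| < \sigma/\sqrt{m-q-1}$, it is therefore enough to prove the distribution-free coordinate-wise bound $\Delta_j \ge \sigma/\sqrt{m-q-1}$.

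The main obstacle is that last bound. Setting $Y_i := (v_i)_j - g_j$, the $Y_i$ are i.i.d., mean-zero, with $\E[Y_i^2]\ge\sigma^2$, and with $n := m-q$ we need both $\E[\max_i Y_i]\ge \sigma/\sqrt{n-1}$ and $\E[-\min_i Y_i]\ge \sigma/\sqrt{n-1}$. My plan is to derive these from the identity $\E[\sum_i (Y_i-\bar Y)^2] = (n-1)\sigma^2$ combined with a Cauchy--Schwarz-style inequality linking the one-sided extremes $\max_i Y_i - \bar Y$ and $\bar Y - \min_i Y_i$ to $\sqrt{\sum_i (Y_i-\bar Y)^2}$, followed by Jensen's inequality to push the expectation through the square root. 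Two subtleties deserve care: pinning down the exact constant $\sqrt{n-1}$ rather than a looser factor (Jensen on a concave square root pushes in the unhelpful direction, so the argument has to be arranged so that the inequality applies in the favourable direction), and handling asymmetric gradient distributions for which $\E[\max_i Y_i]$ and $-\E[\min_i Y_i]$ need not agree; the latter I would address by invoking the one-sided bound separately for the sequences $\{Y_i\}$ and $\{-Y_i\}$ and selecting the side dictated by $\mathrm{sign}(g_j)$ in each coordinate.
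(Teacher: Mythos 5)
Your attack and your reduction to a per-coordinate statement coincide with the paper's proof: all Byzantine entries are pushed strictly below (resp.\ above) every honest entry in coordinates where $g_j>0$ (resp.\ $g_j<0$); since $m=2q+1$, the coordinate-wise median then equals the smallest (resp.\ largest) honest value; and the theorem reduces to showing that the expected extreme honest order statistic overshoots the mean by more than $|g_j|$. The two arguments part ways only at that last bound. The paper does not prove it: it imports it from the literature, quoting Theorem~1(b) of \cite{hawkins1971bounds} (equivalently 9(a) of \cite{arnold1979bounds}) in the form $\mu_{1:n}\le g-\sigma/\sqrt{n-1}$ with $n=m-q$, whereas you attempt a self-contained derivation from the sample-variance identity plus a Cauchy--Schwarz-type inequality and Jensen.

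That derivation cannot be completed, and the obstacle you yourself flag (Jensen pointing the wrong way) is not something that can be ``arranged'' away. First, even the deterministic step is weaker than you need: the sharp one-sided bound is $\max_i y_i-\bar y\ \ge\ \sqrt{\sum_i (y_i-\bar y)^2/\bigl(n(n-1)\bigr)}$, with equality when one point sits at $-(n-1)t$ and the rest at $t$; in sample-standard-deviation units this is $s/\sqrt{n}$, not $s/\sqrt{n-1}$. Second, and decisively, the expectation bound you reduce to --- $\E[\max_i Y_i]\ge\sigma/\sqrt{n-1}$ for i.i.d.\ mean-zero $Y_i$ with $\E[Y_i^2]\ge\sigma^2$ --- is false as a distribution-free statement, so no rearrangement of Cauchy--Schwarz and Jensen can prove it. Take $P(Y_i=1-p)=p$ and $P(Y_i=-p)=1-p$, which has mean $0$ and variance $p(1-p)$; then $\E[\max_{i\le n}Y_i]=(1-p)-(1-p)^n\le 1-p$, while $\sigma/\sqrt{n-1}=\sqrt{p(1-p)}/\sqrt{n-1}$, and the required inequality fails whenever $(n-1)(1-p)<p$, e.g.\ for every $p>1-1/n$. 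Moreover, because the median of $\mathcal{V}\cup\mathcal{U}$ always lies between $\min_i (v_i)_j$ and $\max_i (v_i)_j$ (only $q$ of the $2q+1$ entries are Byzantine), this same example defeats \emph{every} possible attack for suitable $g$, so the gap is unfixable under the theorem's stated hypotheses; closing it requires genuine distributional assumptions (e.g.\ Gaussian gradients, for which the expected extreme does scale like a constant times $\sigma$) rather than a cleverer inequality. Incidentally, this shows you got stuck at exactly the fragile point of the published proof --- the paper closes the step only formally, by a citation that cannot hold in the stated generality --- but as submitted, your proposal leaves its decisive lemma both unproven and unprovable.
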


\begin{proof} (sketch)
Since median is independently taken in each coordinate, it is sufficient to prove Byzantine vulnerability for one coordinate or scalars. Thus, for convenience, with a little bit abuse of notation, we suppose that the correct gradients $\mathcal{V} = \{v_1, \ldots, v_{m-q}\}$ and $q$ Byzantine gradients $\mathcal{U} = \{u_1, \ldots, u_q\}$ are all scalars. We only need to show that under certain attacks, the aggregated value $\median(\mathcal{V} \cup \mathcal{U})$ has a different sign than $\sum_{i \in [m-q]} v_i$. 

Without loss of generality, we assume that $g = \frac{1}{m-1} \sum_{i \in [m-q]} \E[v_i] > 0$ (the mirror case can be easily proved with a similar procedure). The Byzantine gradients are all assigned negative value: $u_i < 0, \forall i \in [q]$. Furthermore, we make the Byzantine gradients small enough such that $u_i < \min(\mathcal{V}), \forall i \in [q]$.

By sorting the correct gradients, we can define the sequence $\{v_{1:m-q}, \ldots, v_{m-q:m-q}\}$, where $v_{i:m-q}$ is the $i$th smallest element in $\{v_1, \ldots, v_{m-q}\}$:
\begin{align*}
    v_{1:m-q} \leq v_{2:m-q} \leq \cdots \leq v_{m-q:m-q}.
\end{align*}
We also define the expectation of the $i$th smallest element: $\mu_{i:m-q} = \E[ v_{i:m-q} ]$.

Then, it is easy to check that $\median(\mathcal{V} \cup \mathcal{U}) = v_{1:m-q}$, and $\E \left[ \median(\mathcal{V} \cup \mathcal{U}) \right] = \mu_{1:m-q}$.

Using Theorem 1(b) from \cite{hawkins1971bounds} (equiv. 9(a) from \cite{arnold1979bounds}), we have 
\begin{align*}
\mu_{1:m-q} \leq g - \frac{\sigma}{\sqrt{m-q-1}}.
\end{align*}

Thus, when $g < \frac{\sigma}{\sqrt{m-q-1}}$, $\E \left[ \median(\mathcal{V} \cup \mathcal{U}) \right]$ is negative.

\end{proof}

\begin{remark}
When gradient descent converges, the expectation of the gradient $g$ approaches $0$. Furthermore, since the gradient produced by the correct workers are stochastic, the variance always exists. Thus, eventually, the condition $\max_{j \in [d]} | g_j | < \frac{\sigma}{\sqrt{m-q-1}}$ will be satisfied. To make things worse, the closer SGD approaches a critical point, the less likely the coordinate-wise median is DSSGD-Byzantine-tolerant.
\end{remark}

\begin{remark} \label{rmk:median}
The proof of Theorem~\ref{thm:median} provides the intuition of constructing adversarial gradients for the attackers. In practice, in each coordinate, the attackers only need to guarantee that all the Byzantine values are much smaller than the smallest correct value if the correct expectation is positive, or much larger than the largest correct value if the correct expectation is negative. Then, hopefully, if the variance is large enough, the smallest/largest value has the opposite sign to the correct expectation. Then, the attackers can successfully manipulate the aggregated value into the opposite direction to the correct expectation.
\end{remark}

\subsubsection{Toy Example}

We provide an 1-dimensional toy example to illustrate how easily \texttt{Median} can fail. Suppose there are $3$ correct gradients $\mathcal{V} = \{-0.1, 0.1, 0.3\}$ with the mean $0.1$, and $2$ Byzantine gradient $\mathcal{U} = \{-4, -2\}$ with the negative mean $-3$. According to Definition~\ref{def:marmed}, it is easy to check that $\median(\mathcal{U} \cap \mathcal{V})  = -0.1$, which means that \texttt{Median} produces a value with the opposite sign of the mean of the correct gradients.

\subsection{KRUM}

The following theorem proves that under certain conditions, Krum is not DSSGD-Byzantine-tolerant. Note that Krum requires that $m-2q > 2$. 
\begin{theorem} \label{thm:krum}
We consider the worst case where $m-2q = 3$. The server receives $(m-q)$ correct gradients $\mathcal{V} = \{v_1, \ldots, v_{m-q}\}$ and $q$ Byzantine gradients $\mathcal{U} = \{u_1, \ldots, u_q\}$. We assume that the stochastic gradients have identical expectation $\E[v_i] = g, \forall i \in [m-q]$. We define the mean of the correct gradients $\bar{v} = \frac{1}{m-q} \sum_{i \in [m-q]} v_i$. We assume that the correct gradients are bounded by $\|v_i - \bar{v}\|^2 \leq \|\bar{v}\|^2, \forall i \in [m-q]$. Furthermore, we assume that $v_i \neq v_j, \forall i \neq j, i,j \in [m-q]$, and $\exists \beta$ such that $\|v_i - v_j\|^2 \geq \beta^2, \forall i \neq j, i,j \in [m-q]$. We take $u_1 = u_2 = \cdots = u_q = - \epsilon \bar{v}$, where $\epsilon$ is a small positive constant value such that $\epsilon^2 \|\bar{v}\|^2 \leq \beta^2$. When $(m-q)$ is large enough: $m-q > \frac{2(\epsilon+2)^2}{\epsilon^2} + 2$, we have
\begin{align*}
    \ip{g}{\quad \E\left[ \krum(\mathcal{V} \cup \mathcal{U}) \right]} < 0.
\end{align*}
\end{theorem}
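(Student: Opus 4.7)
The plan is to show that, in every realization satisfying the sample-wise assumptions, Krum strictly prefers any Byzantine copy over any correct gradient, so its output equals $-\epsilon \bar{v}$; taking expectations then gives $\E[\krum(\mathcal{V}\cup\mathcal{U})] = -\epsilon g$, and the conclusion $\ip{g}{-\epsilon g} = -\epsilon \|g\|^2 < 0$ follows since $\epsilon>0$. The heart of the argument is a two-sided estimate on the Krum score $KR$ of Byzantine versus correct candidates, with the hypothesis on $m-q$ engineered exactly so that the two sides separate.

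Because $m-2q=3$, each candidate's score sums squared distances to its $m-q-2=q+1$ nearest neighbors. Since all $q$ Byzantines coincide at $-\epsilon\bar{v}$, the $q+1$ nearest neighbors of any $u_j$ are the other $q-1$ Byzantine copies (at distance $0$) together with the two closest correct gradients. Using the triangle inequality with $\|v_i-\bar{v}\|\le\|\bar{v}\|$, I would bound $\|u_j - v_i\| \le \|v_i-\bar{v}\| + (1+\epsilon)\|\bar{v}\| \le (2+\epsilon)\|\bar{v}\|$, yielding the upper bound $KR(u_j) \le 2(2+\epsilon)^2\|\bar{v}\|^2$. For any correct $v_i$, I would show that each of the $q+1$ squared distances in $KR(v_i)$ is at least $\epsilon^2\|\bar{v}\|^2$: correct neighbors contribute $\|v_i-v_j\|^2 \ge \beta^2 \ge \epsilon^2\|\bar{v}\|^2$ by the pairwise-separation hypothesis together with $\epsilon^2\|\bar{v}\|^2 \le \beta^2$, while a Byzantine neighbor contributes $\|v_i-u_j\|^2 \ge \bigl((1+\epsilon)\|\bar{v}\| - \|v_i-\bar{v}\|\bigr)^2 \ge \epsilon^2\|\bar{v}\|^2$ by the reverse triangle inequality. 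Summing these gives $KR(v_i) \ge (q+1)\epsilon^2\|\bar{v}\|^2$.

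Combining the two bounds, $KR(u_j) < KR(v_i)$ whenever $(q+1)\epsilon^2 > 2(\epsilon+2)^2$, which, using $m-q=q+3$, is exactly the hypothesis $m-q > 2(\epsilon+2)^2/\epsilon^2 + 2$. Consequently Krum selects one of the Byzantine copies, all equal to $-\epsilon\bar{v}$, and the expectation computation above closes the proof. The main obstacle I anticipate is making the lower bound on $KR(v_i)$ genuinely uniform over all possible choices of the $q+1$ nearest neighbors Krum might pick: the argument must not care which Byzantines or which correct peers happen to be nearest. The assumption $\epsilon^2\|\bar{v}\|^2 \le \beta^2$ is precisely what harmonizes the correct-correct and correct-Byzantine distance bounds at the same scale; checking that the reverse triangle inequality here is not vacuous (i.e.\ $(1+\epsilon)\|\bar{v}\| \ge \|v_i-\bar{v}\|$, which follows from $\|v_i-\bar{v}\|\le\|\bar{v}\|$) is the only remaining technicality.
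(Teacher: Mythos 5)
Your proposal is correct and follows essentially the same argument as the paper: the same upper bound $KR(u)\le 2(\epsilon+2)^2\|\bar{v}\|^2$ via the triangle inequality, the same lower bound $KR(v)\ge (m-q-2)\epsilon^2\|\bar{v}\|^2$ via the pairwise-separation hypothesis and the reverse triangle inequality, and the same conclusion that Krum outputs $-\epsilon\bar{v}$ so that $\E[\krum(\mathcal{V}\cup\mathcal{U})]=-\epsilon g$. The only (cosmetic) difference is that you bound every neighbor's squared distance uniformly by $\epsilon^2\|\bar{v}\|^2$ in one pass, whereas the paper splits the lower bound into two cases depending on whether any Byzantine gradients appear among the nearest neighbors.
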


\begin{proof} (sketch)
For $\forall u \in \mathcal{U}$, $u = -\epsilon \bar{v}$, where $\bar{v} = \frac{1}{m-1} \sum_{i \in [m-q]} v_i$.

Since any $u \in \mathcal{U}$ is identical, the nearest $(m-q-4)$ neighbours of $u$ must belong to $\mathcal{U}$. The remaining $(m-q-2) - (m-q-4) = 2$ nearest neighbours must belong to the set of correct gradients $\mathcal{V}$. Thus, we have
\begin{align*}
    KR(u) \leq 2 \| \bar{v} + \bar{v} + \epsilon \bar{v} \|^2 = 2(\epsilon+2)^2 \|\bar{v}\|^2.
\end{align*}
For the correct gradients $\forall v \in \mathcal{V}$, there are two cases:
\begin{itemize}
    \item \textbf{Case 1:} There are some $u \in \mathcal{U}$ which belong to the $(m-q-2)$ nearest neighbours of $v$. 
    
    Suppose there are $a_1$ nearest neighbours in $\mathcal{V}$ and $a_2$ nearest neighbours in $\mathcal{U}$, where $a_1 + a_2 = m-q-2$. 
    Since the correct gradients are bounded by $\|v_i - \bar{v}\|^2 \leq \|\bar{v}\|^2, \forall i \in [m-q]$, it is easy to check that $\|v-u\|^2 \geq \epsilon^2 \|\bar{v}\|^2$.
    Thus, we have 
    \begin{align*}
        KR(v) \geq a_1 \beta^2 + a_2 \|v-u\|^2 \geq (m-q-2) \epsilon^2 \|\bar{v}\|^2.
    \end{align*}
    
    \item \textbf{Case 2:} There are no $u \in \mathcal{U}$ which belong to the $(m-q-2)$ nearest neighbours of $v$. 
    Thus, we have
    \begin{align*}
        KR(v) \geq (m-q-2) \beta^2 \geq (m-q-2) \epsilon^2 \|\bar{v}\|^2.
    \end{align*}
\end{itemize}
In both cases, we have 
$
    KR(v) \geq (m-q-2) \epsilon^2 \|\bar{v}\|^2.
$
Thus, when $(m-q)$ is large enough: $m-q > \frac{2(\epsilon+2)^2}{\epsilon^2} + 2$, we have
\begin{align*}
    KR(u) &\leq  2(\epsilon+2)^2 \|\bar{v}\|^2 <  (m-q-2) \epsilon^2 \|\bar{v}\|^2 \\
    &\leq KR(v). 
\end{align*}
As a result, $\krum(\mathcal{V} \cap \mathcal{U}) = u = -\epsilon \bar{v}$. Thus, $\E\left[ \krum(\mathcal{V} \cap \mathcal{U}) \right] = -\epsilon g$.

\end{proof}

\begin{remark}
In the theorem above, we assume that all the correct gradients are inside a Euclidean ball centered at their mean: $\|v_i - \bar{v}\|^2 \leq \|\bar{v}\|^2, \forall i \in [m-q]$. Such assumption can not always be satisfied, but it is reasonable that the random samples are sometimes inside such a Euclidean ball, if the variance is not too large. On the other hand, we assume that the pair-wise distances between the correct gradients are lower-bounded by $\beta > 0$. Almost surely, such $\beta$ exists, no matter how small it is. Note that the Byzantine attackers are supposed to be omniscient. Thus, the attackers can spy on the honest workers, and obtain $\mathcal{V}$ and $\beta$. Then, the attackers can choose an $\epsilon$ such that $\epsilon^2 \|\bar{v}\|^2 \leq \beta^2$. Finally, we only need the number of workers to be large enough, so that $m-q > \frac{2(\epsilon+2)^2}{\epsilon^2} + 2$.
\end{remark}

\begin{remark} \label{rmk:krum}
The proof of Theorem~\ref{thm:krum} provides the intuition of constructing adversarial gradients for the attackers. In practice, the attackers only need to assign $\frac{\epsilon}{m-q} \sum_{i \in [m-q]} v_i$ to all the Byzantine gradients, with an $\epsilon > 0$ small enough.
\end{remark}

\begin{remark}
Note that in \citet{blanchard2017machine}, \texttt{Krum} requires the assumption that $c \sigma < \|g\|$ for convergence, where $c$ is a general constant, $\sigma$ is the maximal variance of the gradients, and $g$ is the gradient in expectation. Note that $\|g\| \rightarrow 0$ when SGD converges to a critical point. Thus, such an assumption is never guaranteed to be satisfied, if the variance is non-zero. Furthermore, the better SGD converges, the less likely such an assumption can be satisfied.
\end{remark}

\subsubsection{Toy Example}

Note that the assumptions made in Theorem~\ref{thm:krum} are sufficient but not necessary conditions of the DSSGD-Byzantine vulnerability of \texttt{Krum}. In practice, it can be easier to find an $\epsilon$ that crashes \texttt{Krum}, especially for 1-dimensional cases. 

We provide an 1-dimensional toy example to show how easily \texttt{Krum} can fail. Suppose there are $6$ correct gradients $\mathcal{V} = \{0,0.02,0.14,0.26,0.38,0.5\}$ with the mean $0.2167$, and $3$ Byzantine gradient $\mathcal{U} = \{-0.1, -0.1, -0.1\}$ with the negative mean $-0.1$. According to Definition~\ref{def:krum}, the corresponding function values $KR(\cdot)$ of $\mathcal{U} \cap \mathcal{V} = \{-0.1, -0.1, -0.1, 0,0.02,0.14,0.26,0.38,0.5\}$ are $\{0.0244,0.0244,0.0244,0.0304,0.0436,0.1060, 0.1440,\allowbreak 0.2160,0.4320\}$. Thus, $\krum(\mathcal{U} \cap \mathcal{V})  = -0.1$, which means that \texttt{Krum} chooses the Byzantine gradient with the opposite sign of the mean of the correct gradients.

\begin{figure*}[htb!]
\centering
\subfigure[Top-1 Accuracy on Testing Set, $\epsilon=10$]{\includegraphics[width=0.49\textwidth,height=4.4cm]{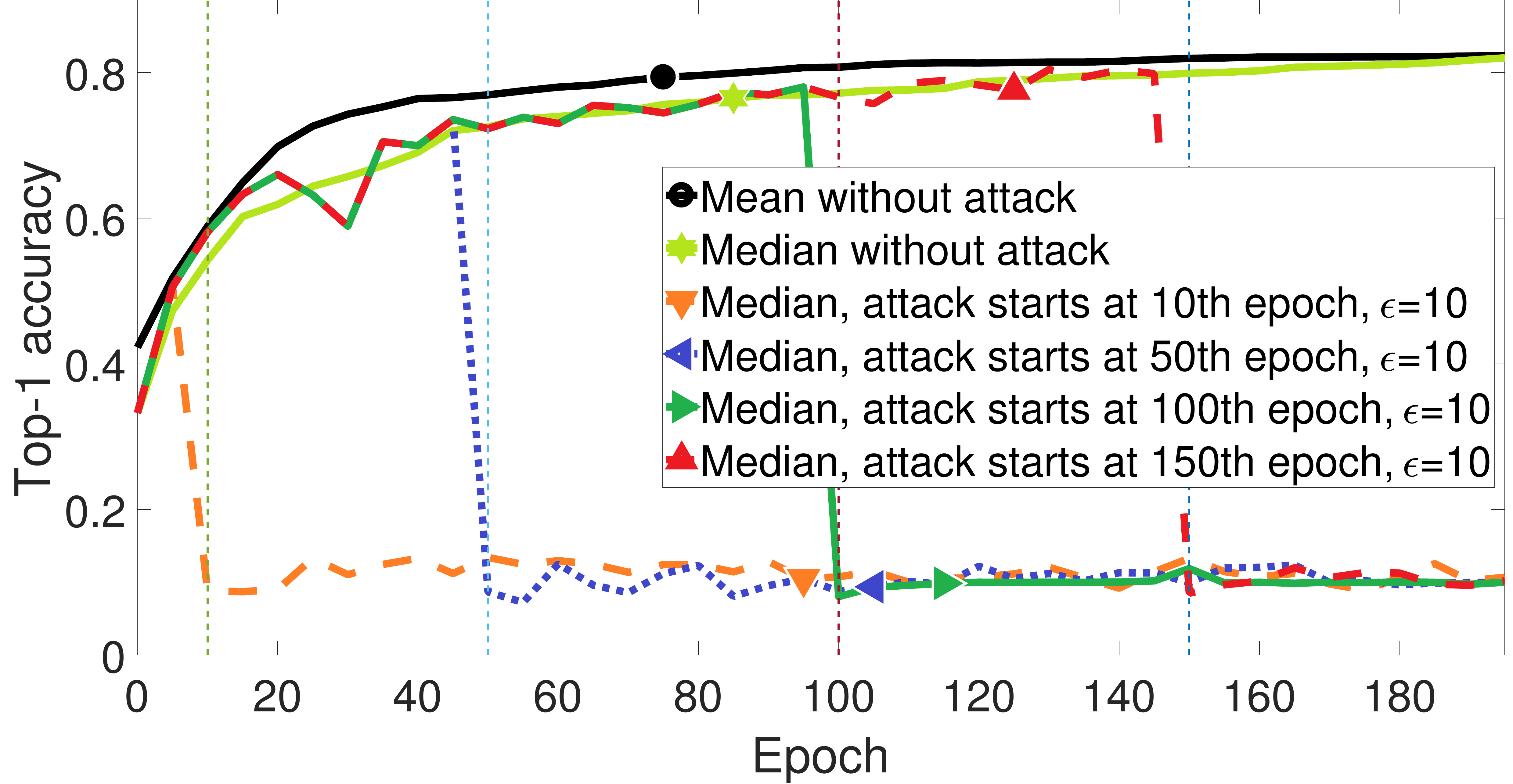}}
\subfigure[Cross Entropy on Training Set, $\epsilon=10$]{\includegraphics[width=0.49\textwidth,height=4.4cm]{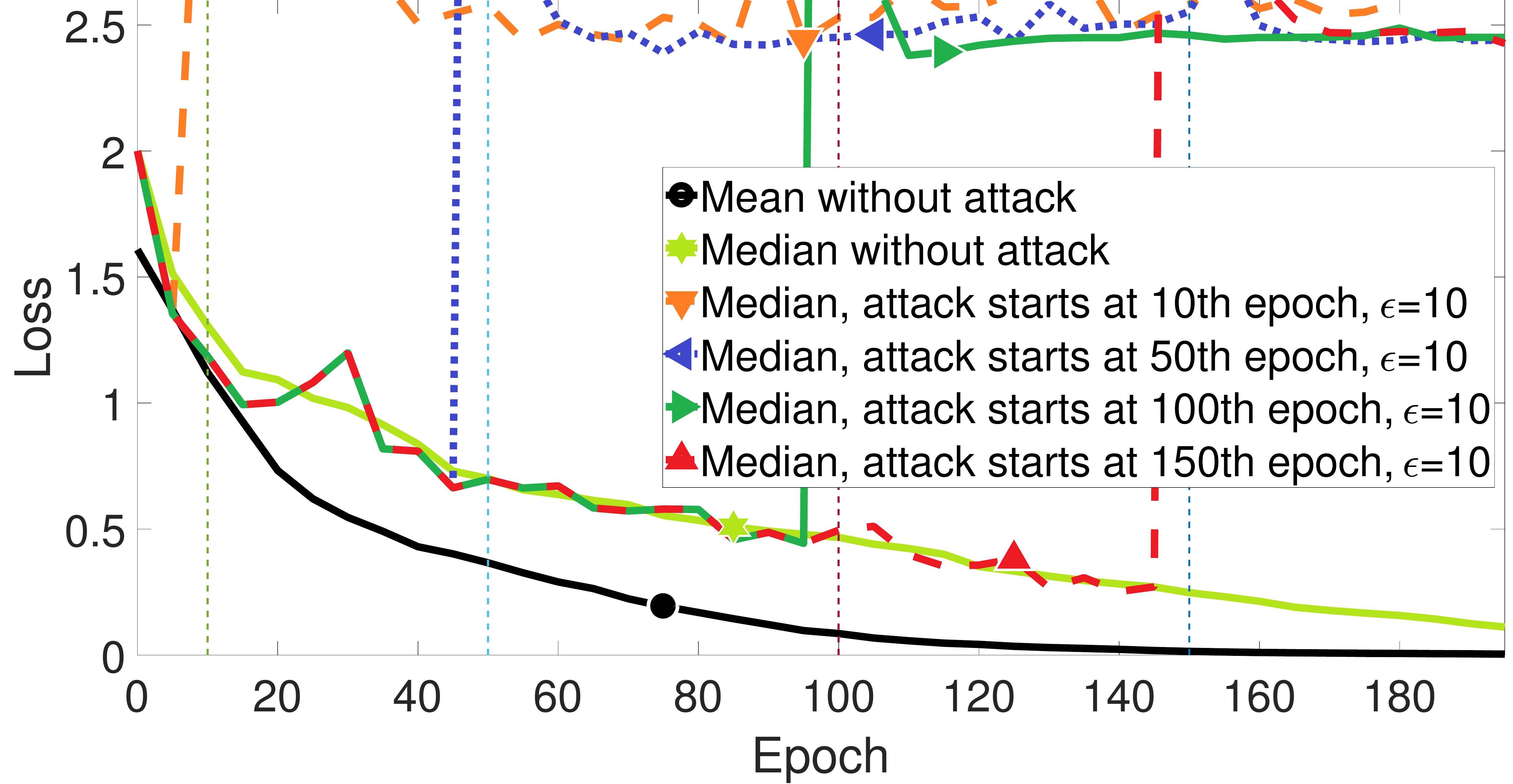}}
\subfigure[Top-1 Accuracy on Testing Set, $\epsilon=0.1$]{\includegraphics[width=0.49\textwidth,height=4.4cm]{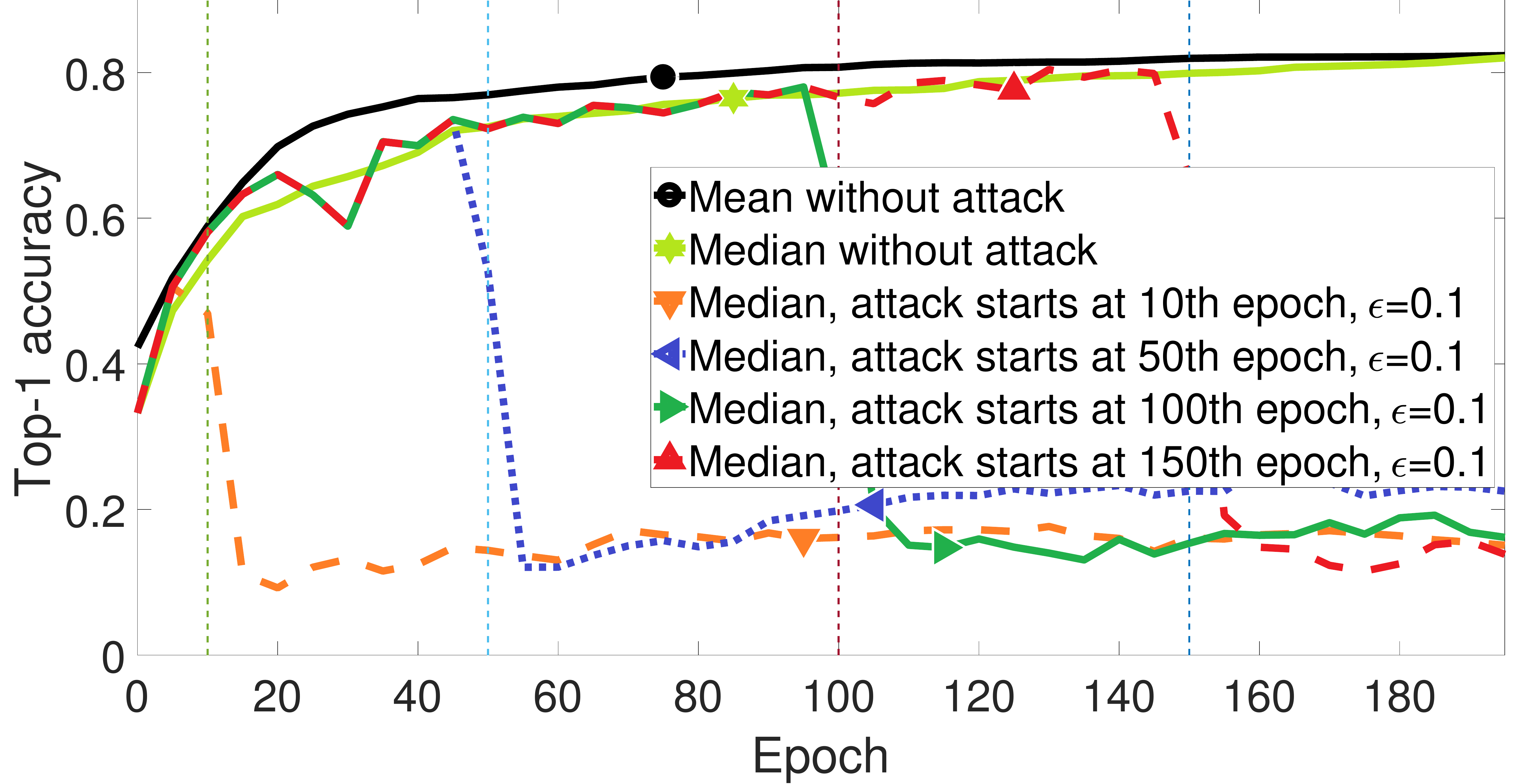}}
\subfigure[Cross Entropy on Training Set, $\epsilon=0.1$]{\includegraphics[width=0.49\textwidth,height=4.4cm]{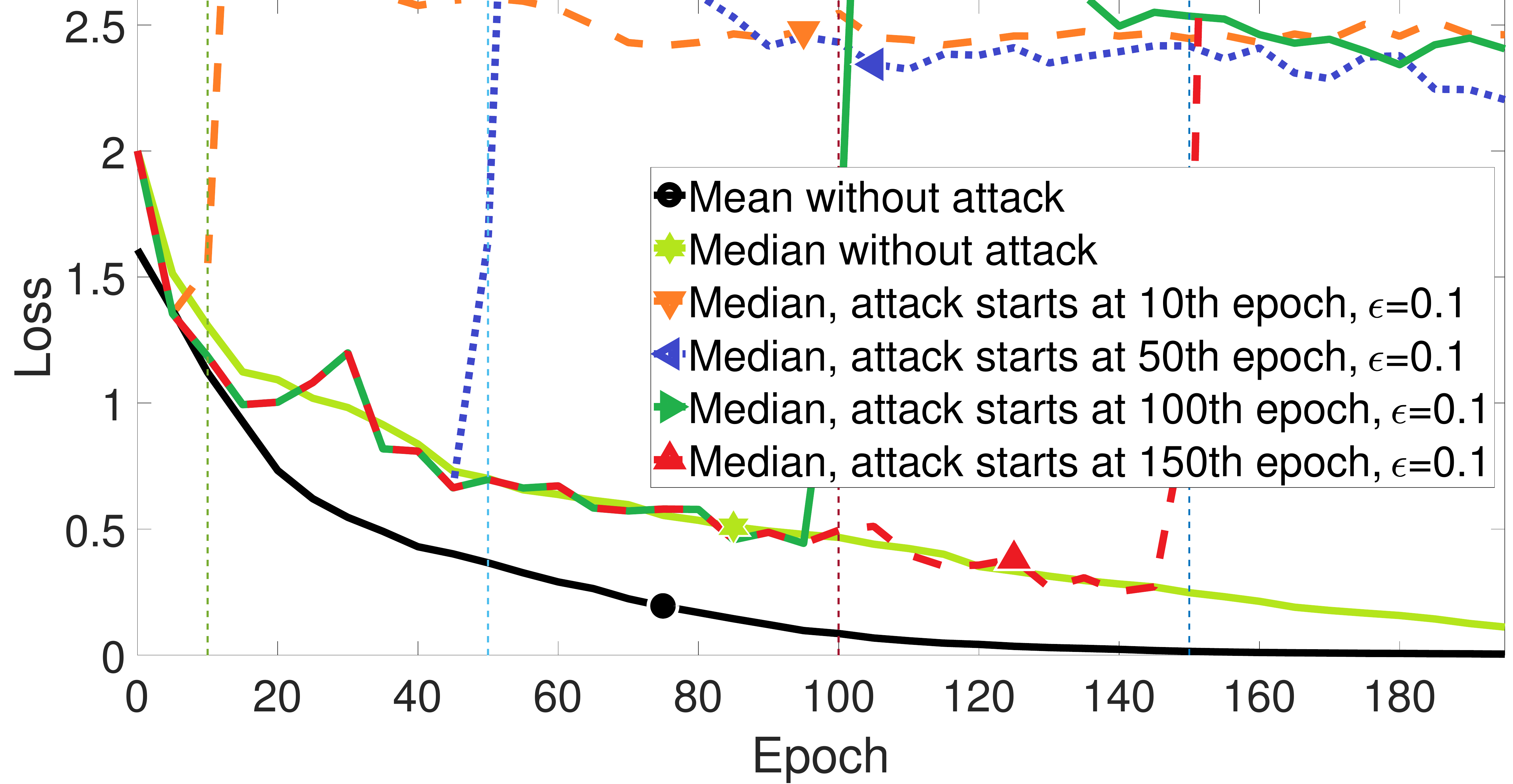}}
\subfigure[Top-1 Accuracy on Testing Set, $\epsilon=0$]{\includegraphics[width=0.49\textwidth,height=4.4cm]{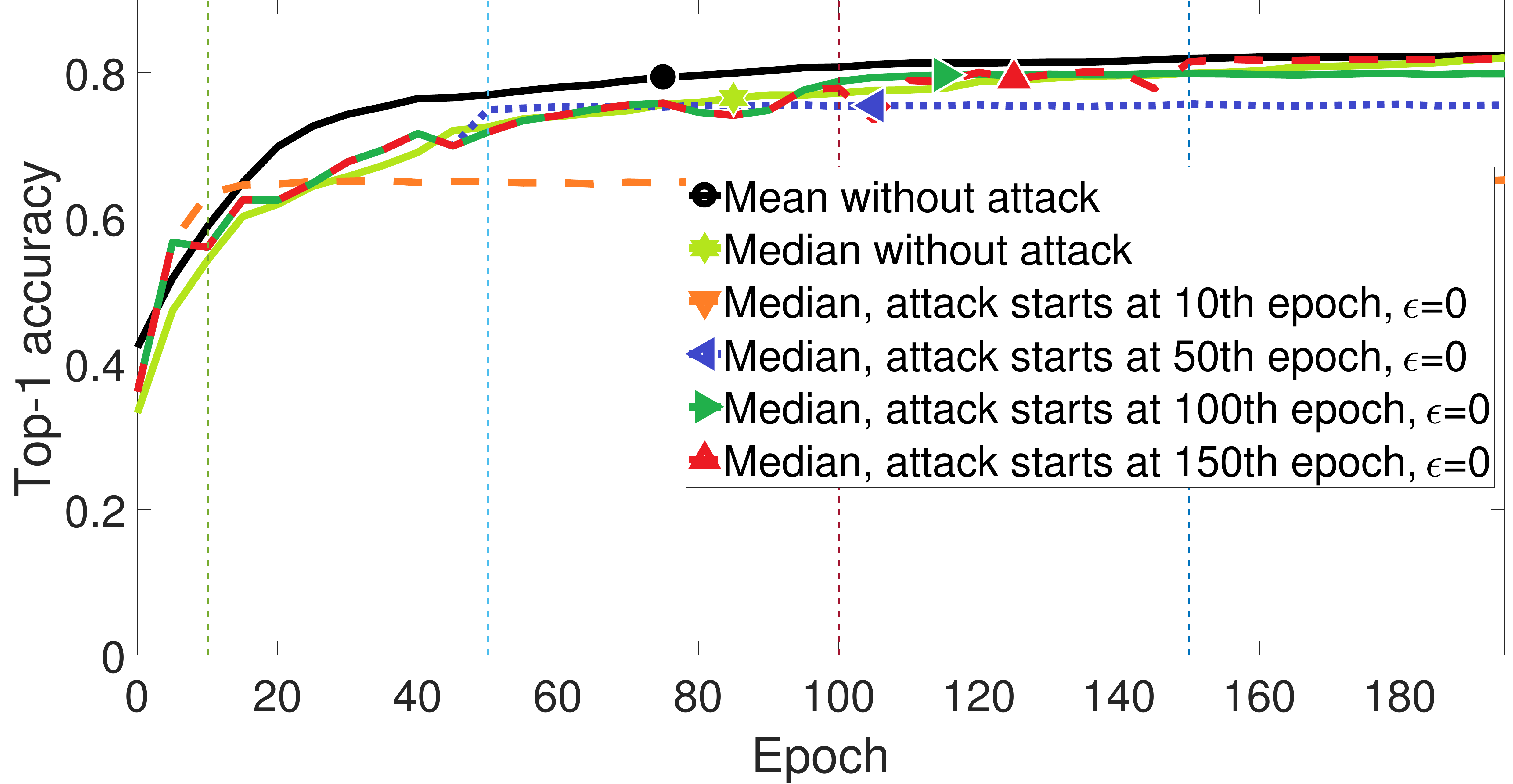}}
\subfigure[Cross Entropy on Training Set, $\epsilon=0$]{\includegraphics[width=0.49\textwidth,height=4.4cm]{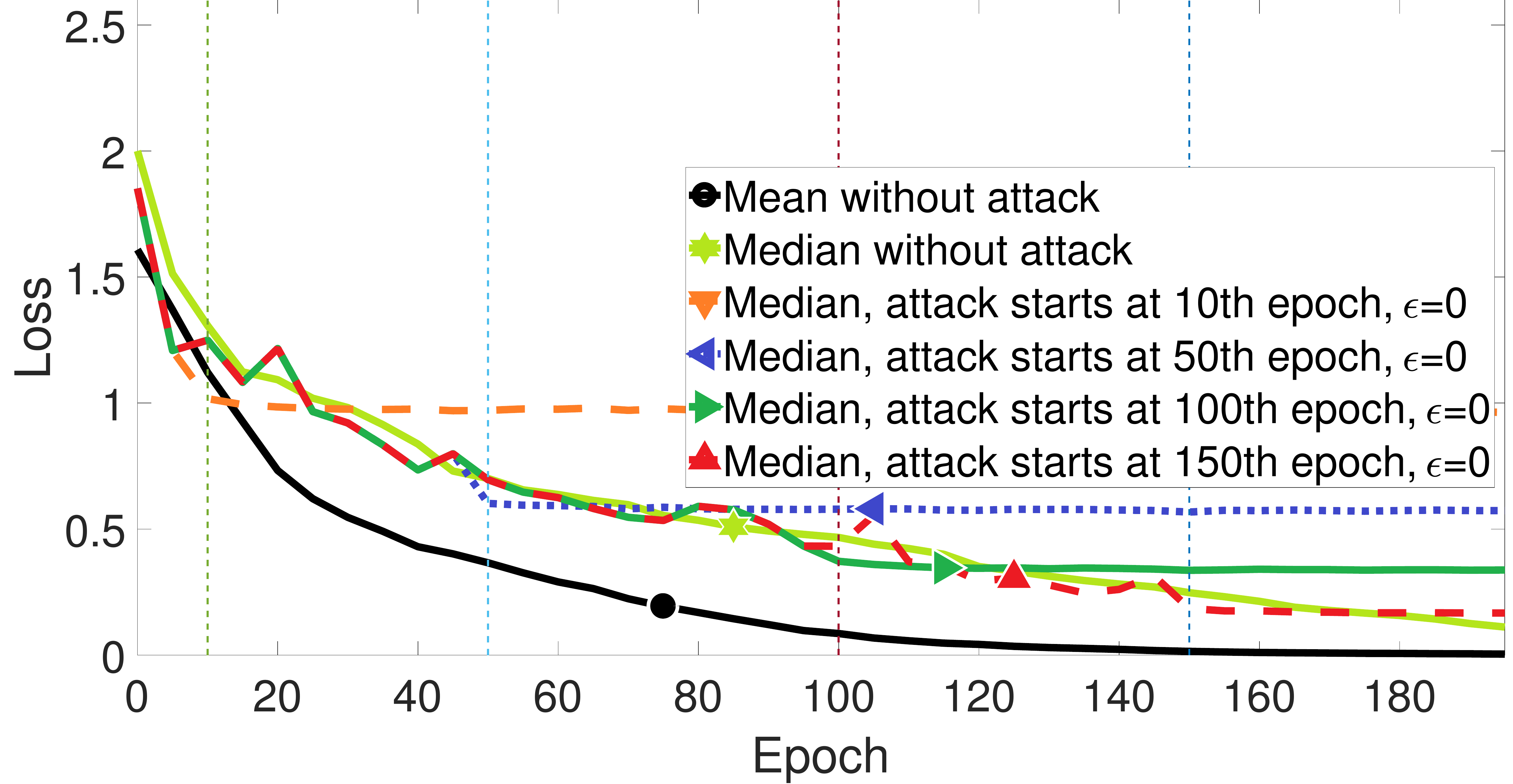}}
\subfigure[Top-1 Accuracy on Testing Set, $\epsilon=-10$]{\includegraphics[width=0.49\textwidth,height=4.4cm]{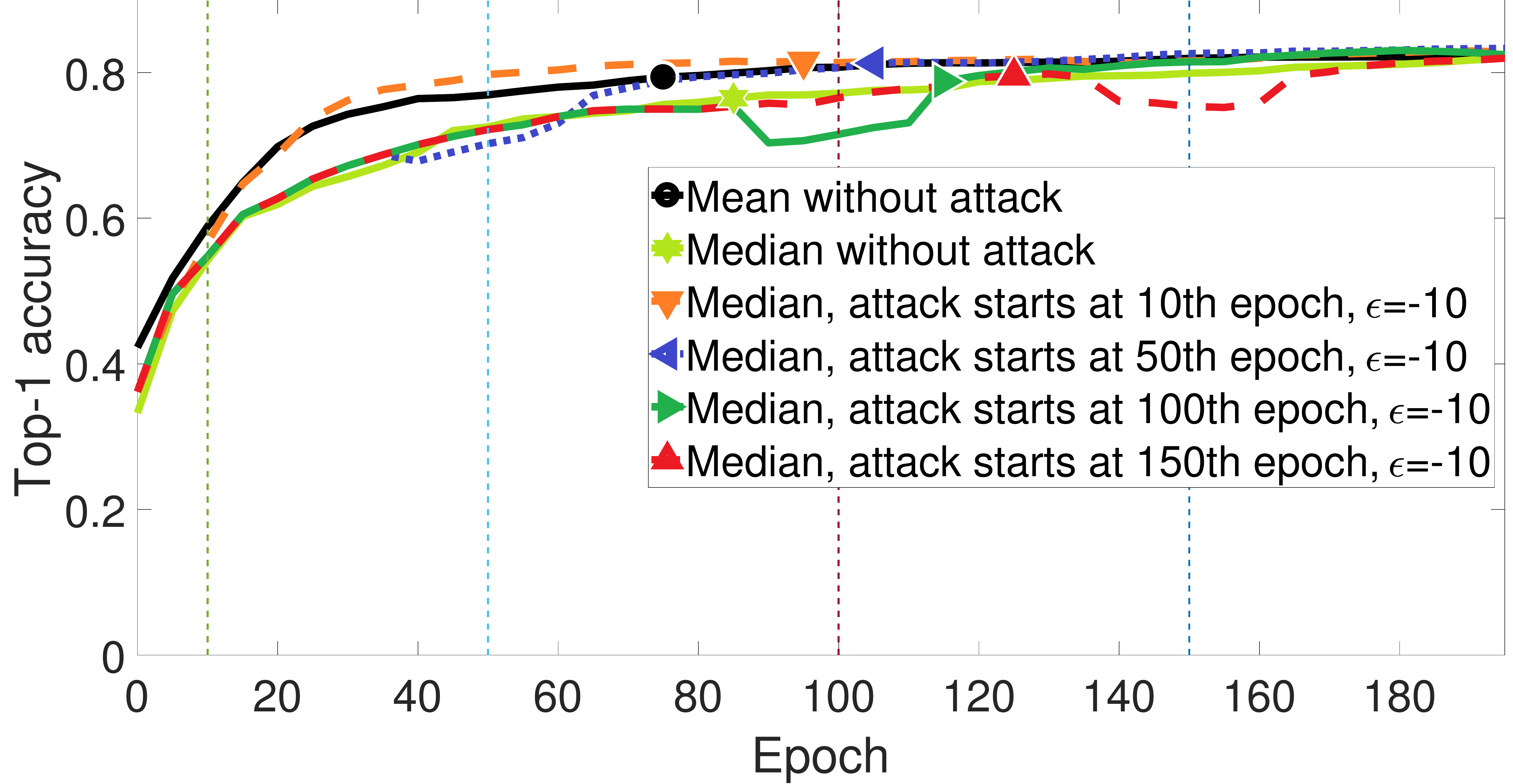}}
\subfigure[Cross Entropy on Training Set, $\epsilon=-10$]{\includegraphics[width=0.49\textwidth,height=4.4cm]{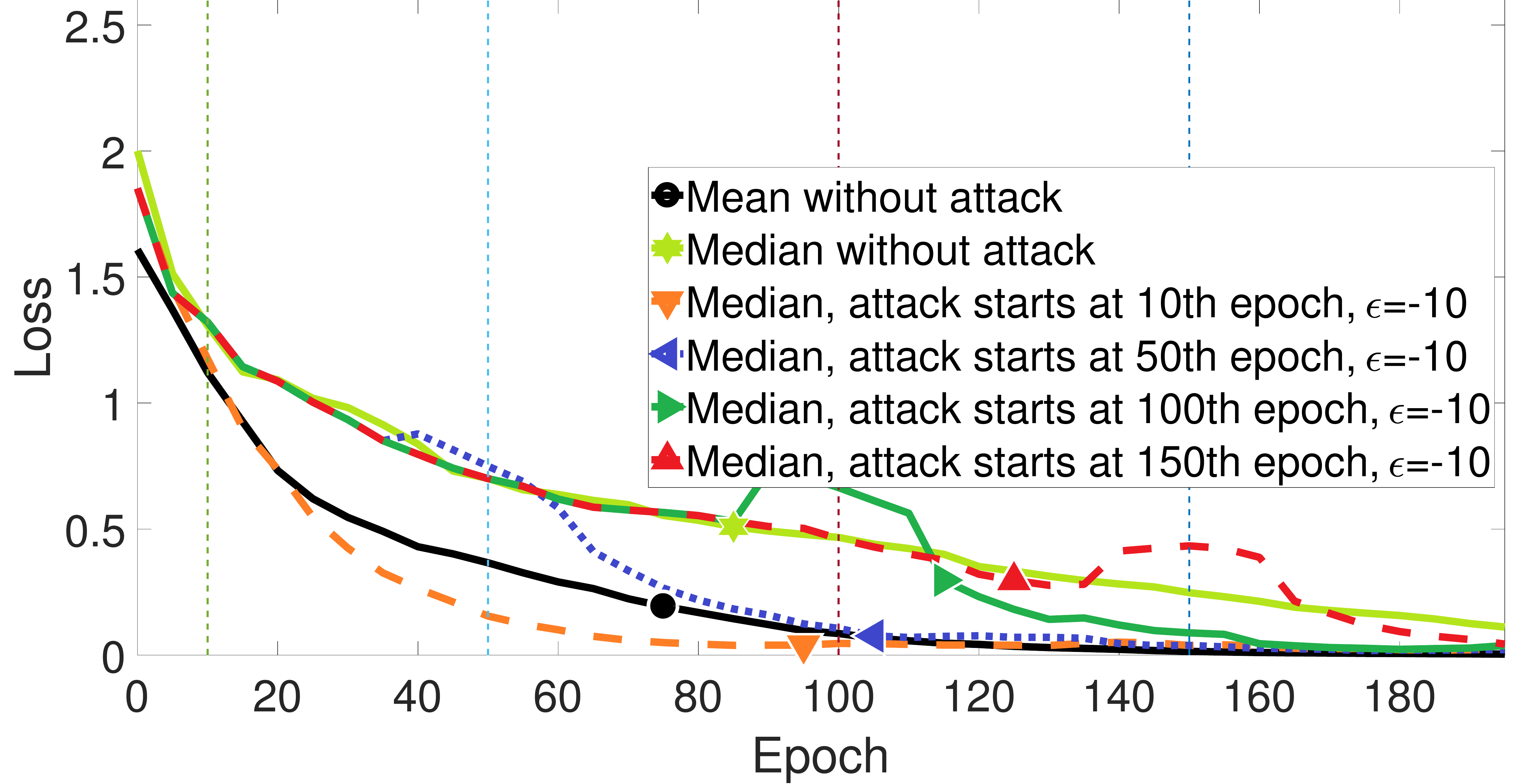}}
\caption{Convergence on training set, using \texttt{Median} as aggregation rule. $\epsilon \in \{10, 0.1, 0, -10\}$.}
\label{fig:median}
\end{figure*}

\begin{figure*}[htb!]
\centering
\subfigure[Top-1 Accuracy on Testing Set, $\epsilon=0.1$]{\includegraphics[width=0.49\textwidth,height=4.4cm]{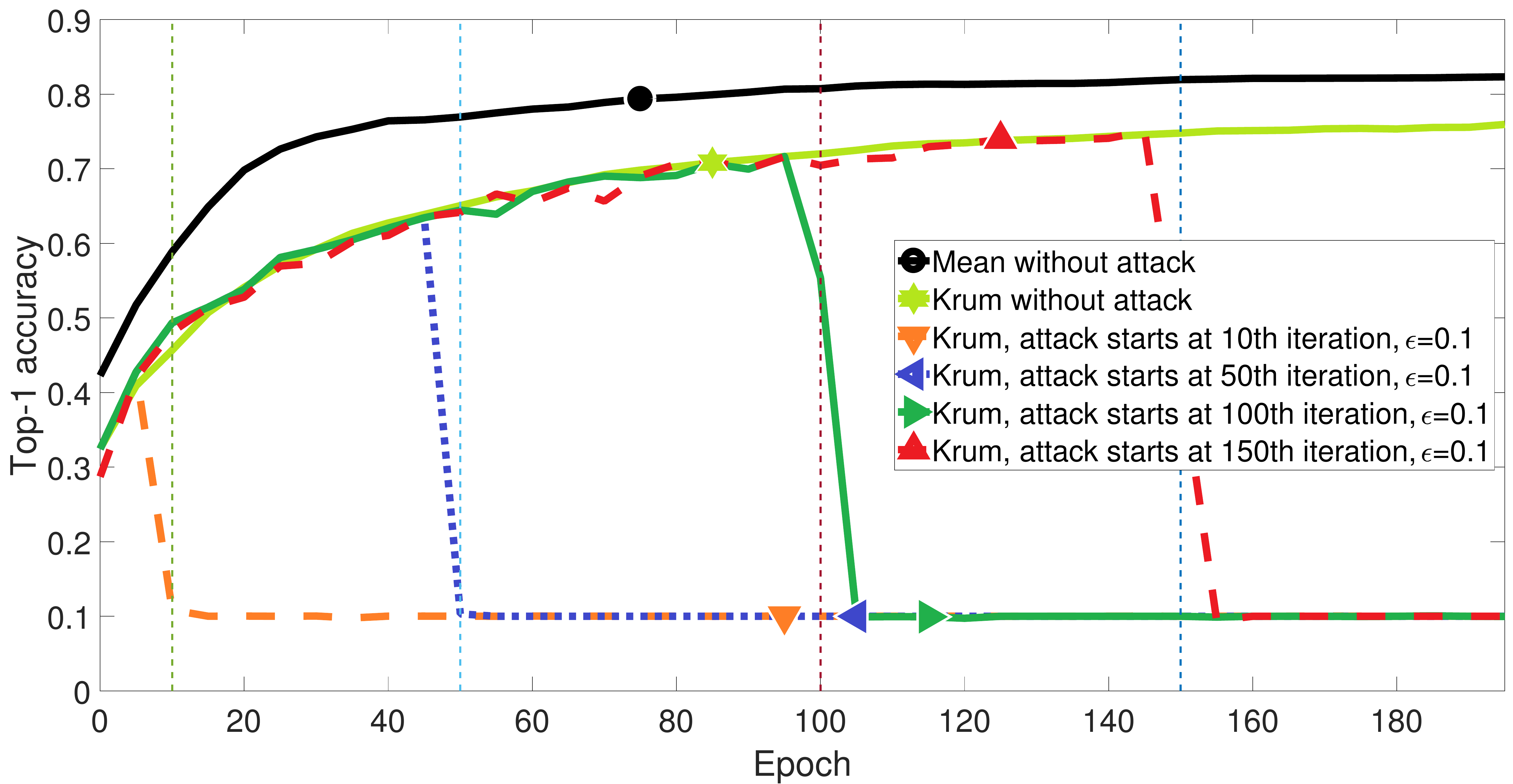}}
\subfigure[Cross Entropy on Training Set, $\epsilon=0.1$]{\includegraphics[width=0.49\textwidth,height=4.4cm]{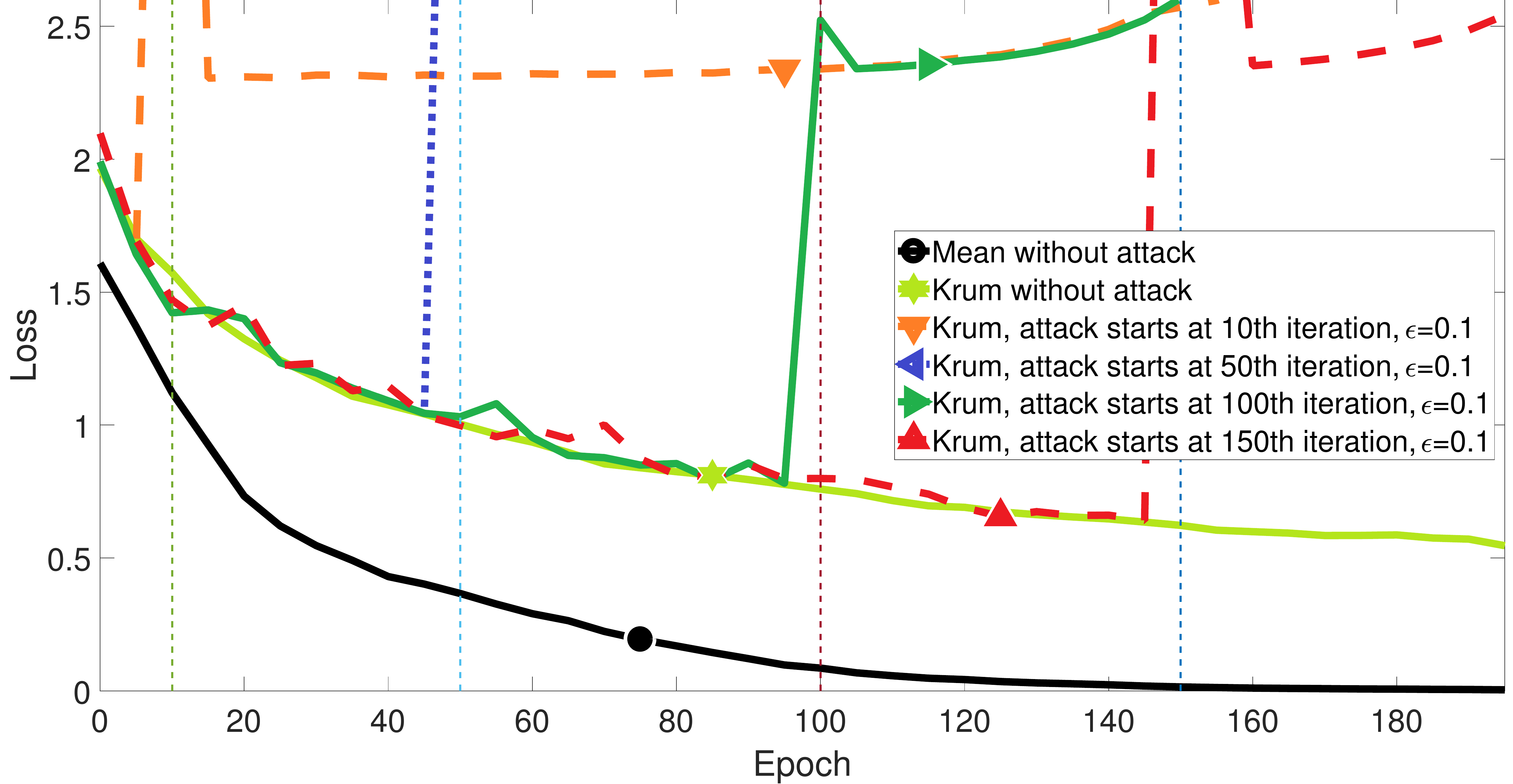}}
\subfigure[Top-1 Accuracy on Testing Set, $\epsilon=0.5$]{\includegraphics[width=0.49\textwidth,height=4.4cm]{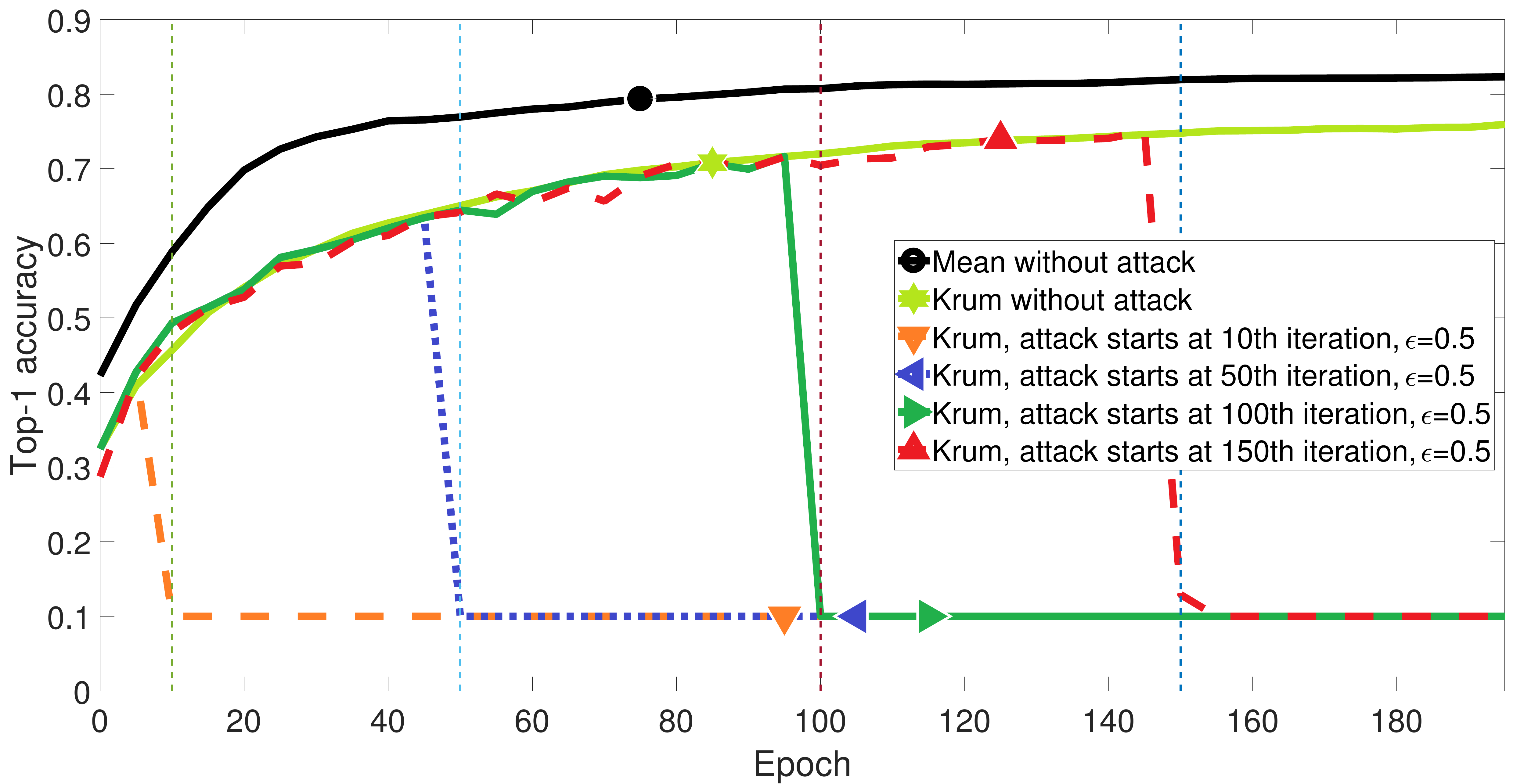}}
\subfigure[Cross Entropy on Training Set, $\epsilon=0.5$]{\includegraphics[width=0.49\textwidth,height=4.4cm]{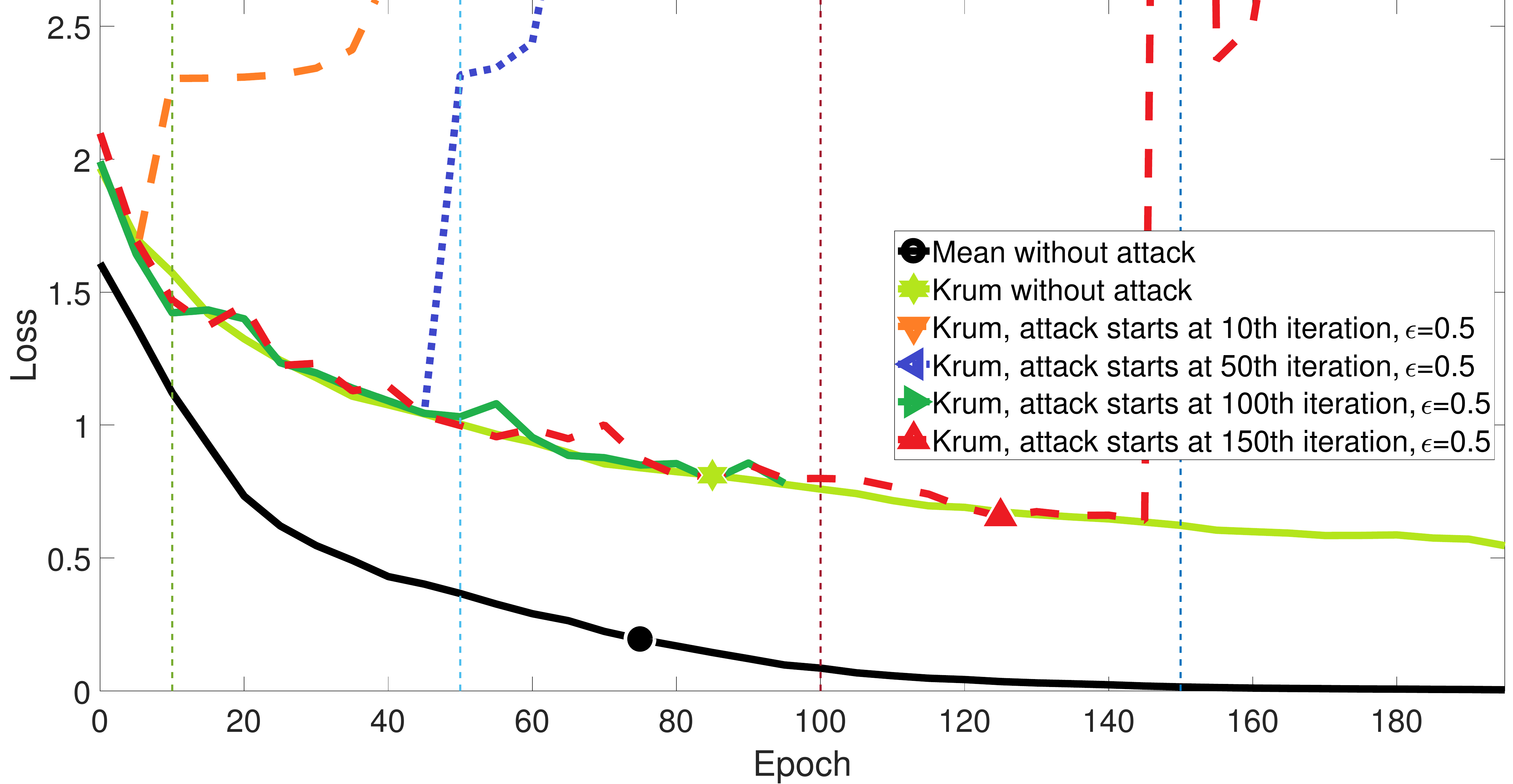}}
\subfigure[Top-1 Accuracy on Testing Set, $\epsilon=1$]{\includegraphics[width=0.49\textwidth,height=4.4cm]{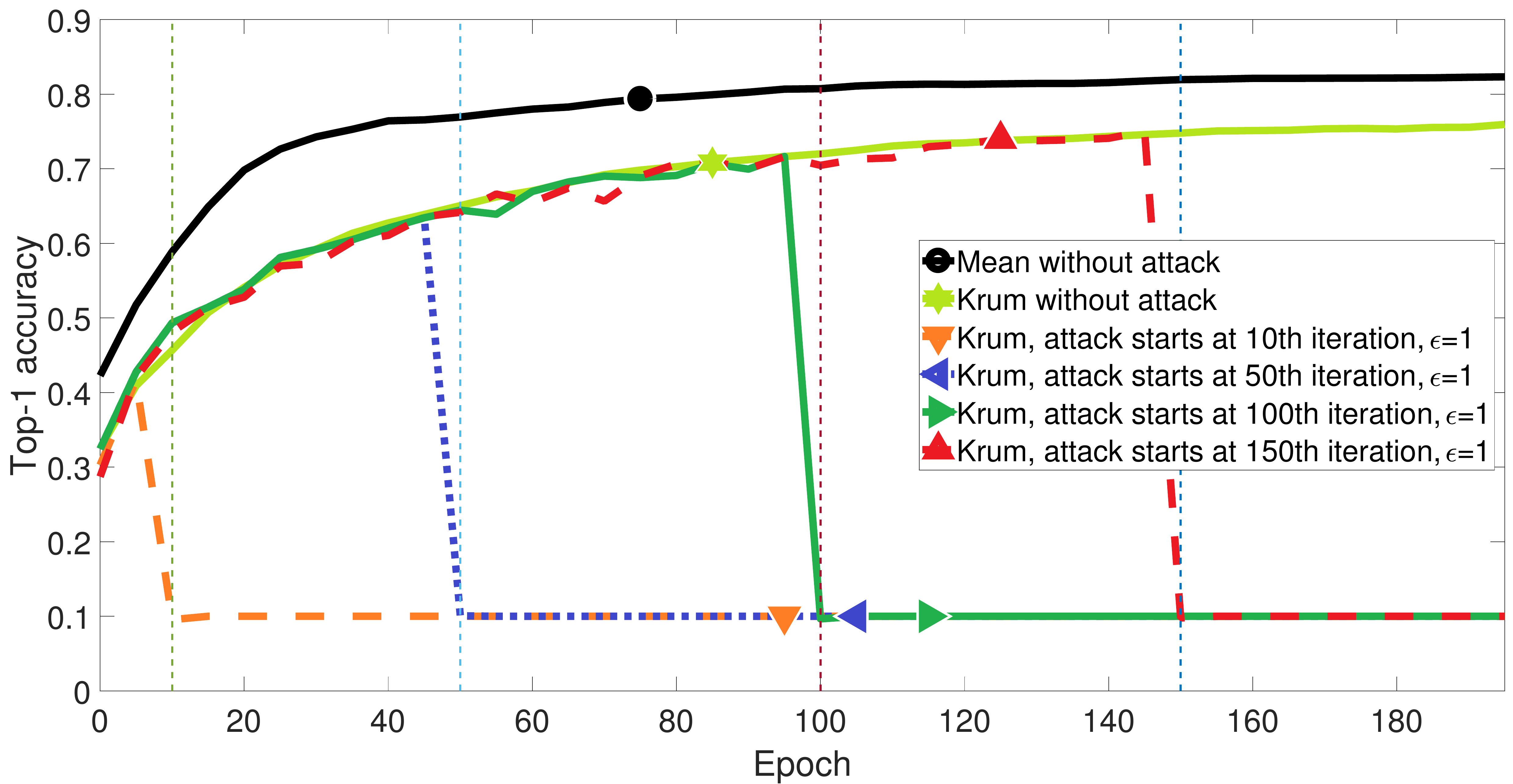}}
\subfigure[Cross Entropy on Training Set, $\epsilon=1$]{\includegraphics[width=0.49\textwidth,height=4.4cm]{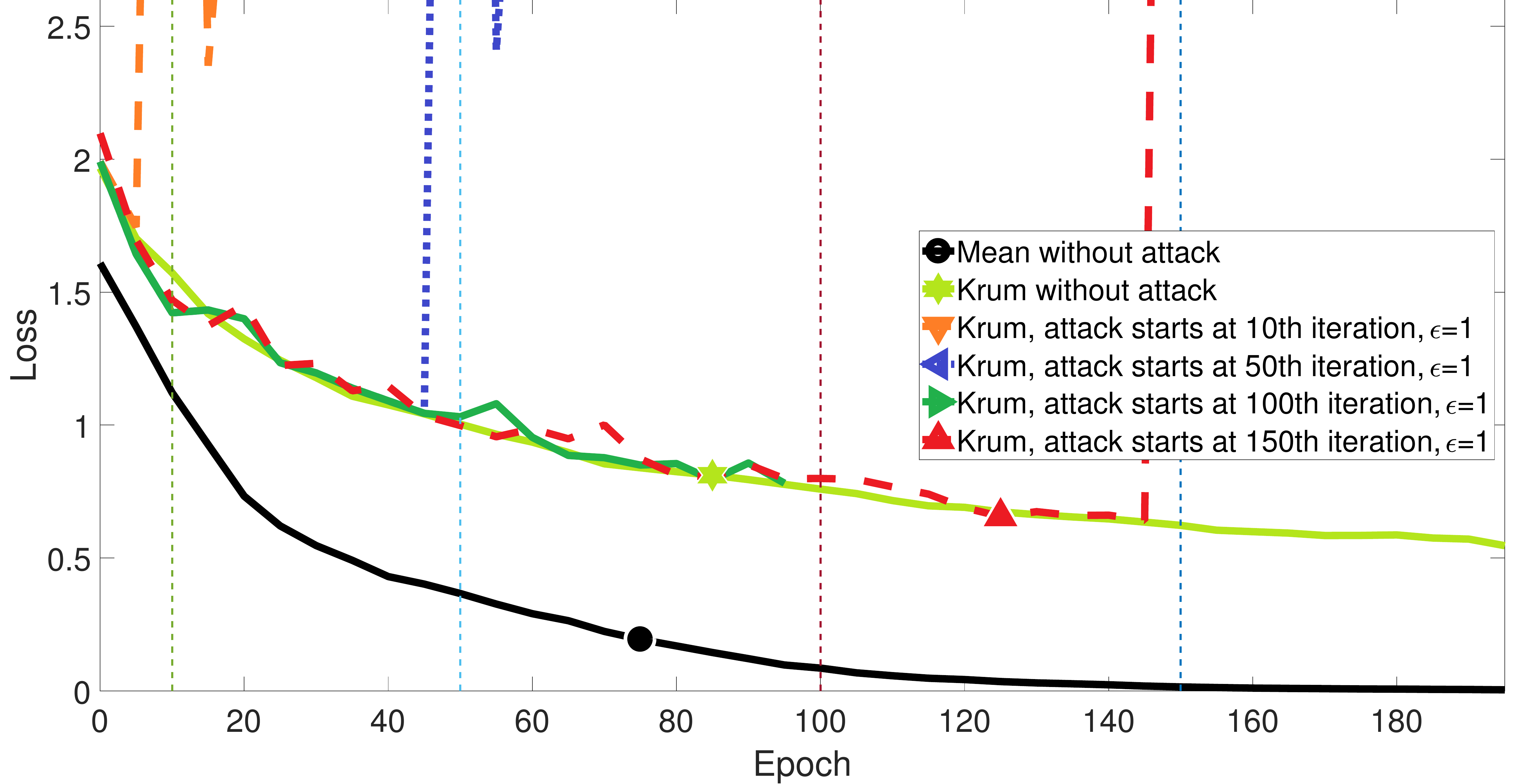}}
\subfigure[Top-1 Accuracy on Testing Set, $\epsilon=10$]{\includegraphics[width=0.49\textwidth,height=4.4cm]{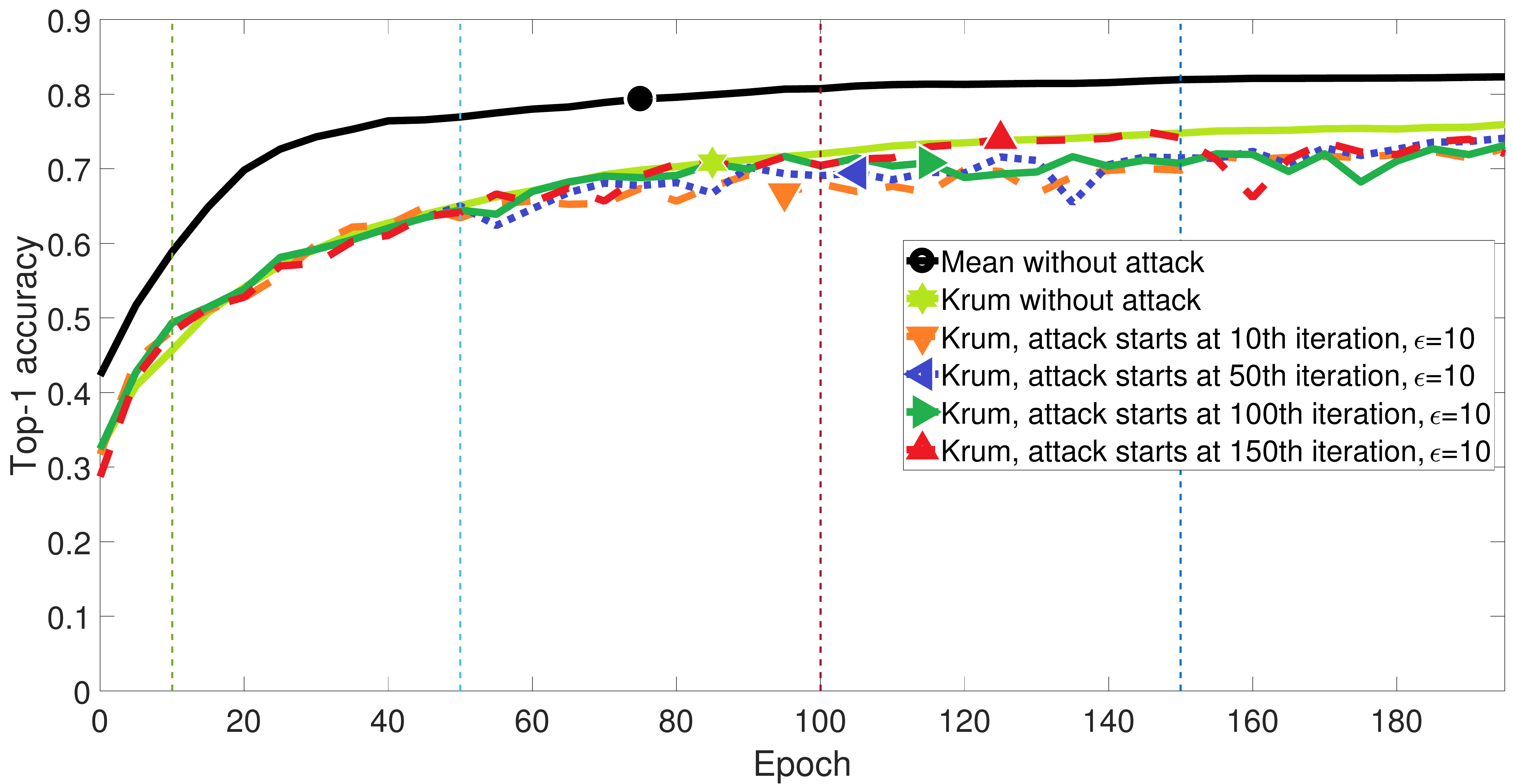}}
\subfigure[Cross Entropy on Training Set, $\epsilon=10$]{\includegraphics[width=0.49\textwidth,height=4.4cm]{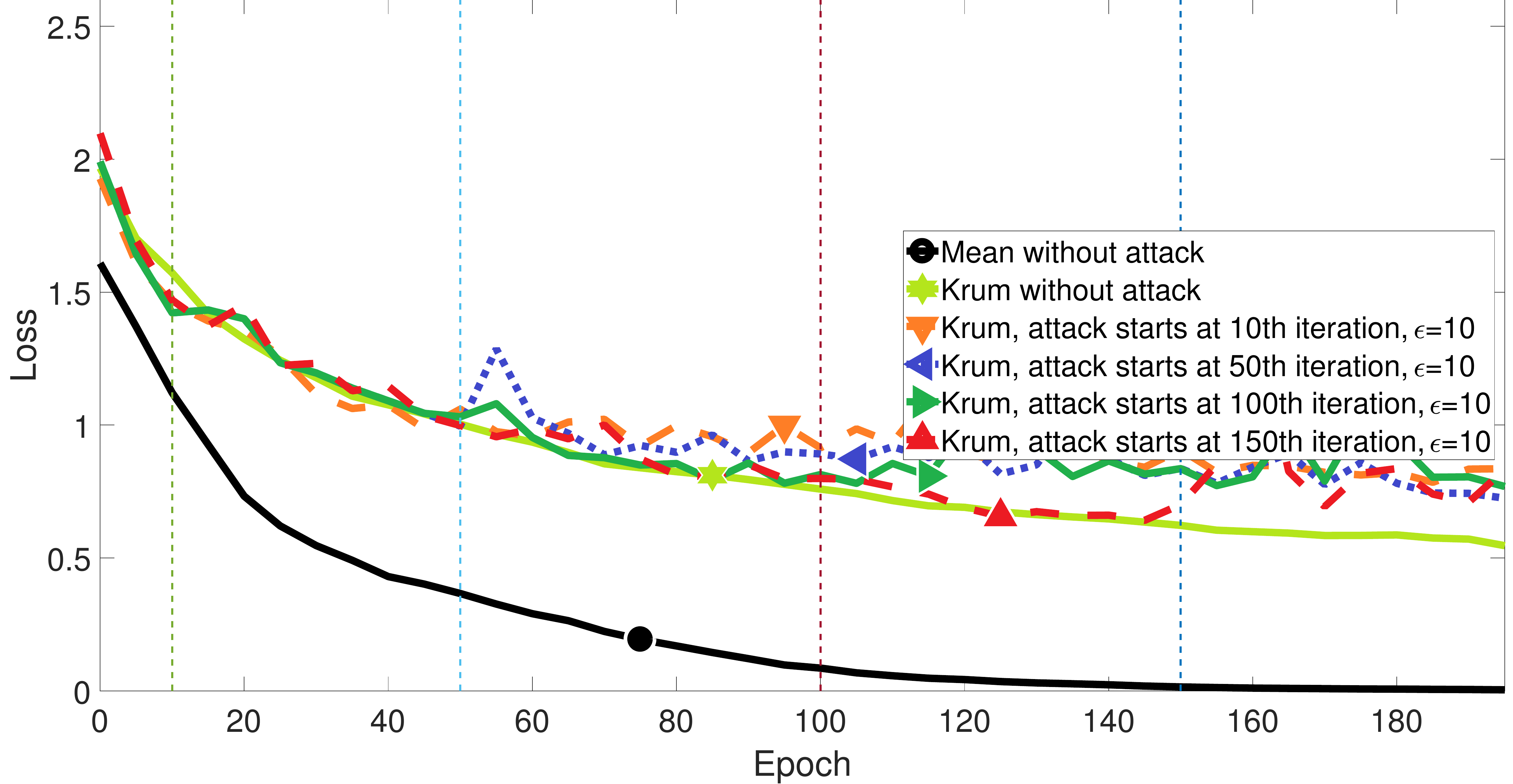}}
\caption{Convergence on training set, using \texttt{Krum} as aggregation rule. $\epsilon \in \{0.1, 0.5, 1, 10\}$.}
\label{fig:krum}
\end{figure*}

\section{CASE STUDY}

In this section, we implement special attack strategies for \texttt{Median} and \texttt{Krum}, and evaluate our attack strategies on a real-world application. The attack strategies are designed by using the intuitions underlying Theorem~\ref{thm:median} and Theorem~\ref{thm:krum}, which are mentioned in Remark~\ref{rmk:median} and Remark~\ref{rmk:krum}.

\subsection{DATASETS AND EVALUATION METRICS}
We conduct experiments on the benchmark CIFAR-10 image classification dataset~\citep{krizhevsky2009learning}, which is composed of 50k images for training and 10k images for testing. We use a convolutional neural network~(CNN) with 4 convolutional layers followed by 1 fully connected layer. The detailed network architecture can be found in the appendix. For any worker, the minibatch size for SGD is $50$.

In each experiment, we launch 25 worker processes. We repeat each experiment 10 times and take the average. We use top-1 accuracy on the testing set and the cross-entropy loss function on the training set as the evaluation metrics. 

We use the averaging, \texttt{Median}, and \texttt{Krum} without attacks as the gold standard, which are referred to as \texttt{Mean without attack}, \texttt{Median without attack}, and \texttt{Krum without attack}. We start the attack at different epochs, so that SGD can warm up and make some progress first. We include some additional experiments in the appendix.

\subsection{MEDIAN}

In each iteration, the server receives $m=25$ gradients. A randomly selected subset of $q=12$ correct gradients are replaced by Byzantine gradients. We define the set of Byzantine gradients as $\mathcal{U} = \{u_1, \ldots, u_{12}\}$, and the set of the remaining correct gradients as $\mathcal{V} = \{v_1, \ldots, v_{13}\}$. Our attack strategy is as follows:
\begin{align*}
u_1 = u_2 = \cdots = u_{12} = - \frac{\epsilon}{13} \sum_{i=1}^{13} v_i.
\end{align*}
According to Theorem~\ref{thm:median} and Remark~\ref{rmk:median}, \texttt{Median} is vulnerable to positive $\epsilon$ with large magnitude $|\epsilon|$.

We test the above attack strategy with different $\epsilon$. The results are shown in Figure~\ref{fig:median}. \texttt{Median} fails when $\epsilon > 0$. When $\epsilon = 0$, \texttt{Median} gets stuck and stops making progress. When $\epsilon < 0$, \texttt{Median} successfully defends against the attack.

\subsection{KRUM}

In each iteration, the server receives $m=25$ gradients. A randomly selected subset of $q=11$ correct gradients are replaced by Byzantine gradients. We define the set of Byzantine gradients as $\mathcal{U} = \{u_1, \ldots, u_{11}\}$, and the set of the remaining correct gradients as $\mathcal{V} = \{v_1, \ldots, v_{14}\}$. Our attack strategy is as follows:
\begin{align*}
u_1 = u_2 = \cdots = u_{11} = - \frac{\epsilon}{14} \sum_{i=1}^{14} v_i.
\end{align*}
According to Theorem~\ref{thm:krum} and Remark~\ref{rmk:krum}, \texttt{Krum} is vulnerable to positive $\epsilon$ with small magnitude $|\epsilon|$.

We test the above attack strategy with different $\epsilon$. The results are shown in Figure~\ref{fig:krum}. \texttt{Krum} fails when $\epsilon > 0$ is small. When $\epsilon$ is large enough, \texttt{Krum} successfully defends against the attack.

\subsection{DISCUSSION}

Surprisingly, both \texttt{Median} and \texttt{Krum} are more vulnerable than we expected. Note that our theorems only analyze the worst cases. There are other cases where \texttt{Median} and \texttt{Krum} can fail.

For \texttt{Median}, even if we take $\epsilon = 0$, SGD still performs badly. Theoretically, even if we do not use positive $\epsilon$, small $\epsilon$ can still enlarge the variance of SGD, which can be potentially harmful to the convergence. We can see that with large negative $\epsilon$, the defense of \texttt{Median} is successful. In our experiment, we reveal certain new vulnerabilities of \texttt{Median} in distributed synchronous SGD. The experiments conducted by \citet{yin2018byzantine} do not fail because the attacker only changes the labels of the poisoned training data by flipping a $label \in \{0, \ldots, 9\}$ to $9-label$. It is very likely that such an attack produces Byzantine gradients surrounding the correct gradients coordinate-wisely on both sides. However, according to Theorem~\ref{thm:median} and Remark~\ref{rmk:median}, an effective attack should place the Byzantine gradient on one and only one side of the correct gradients, which is the side opposite to the mean of the correct gradients, coordinate-wisely.

For \texttt{Krum}, small positive $\epsilon$ makes SGD vulnerable. Furthermore, even if we take $\epsilon = 1$, \texttt{Krum} still fails. In our experiment, we reveal certain new vulnerability of \texttt{Krum} in distributed synchronous SGD. The experiments conducted by \citet{blanchard2017machine} do not fail even though a similar attack strategy called ``omniscient'' is conducted. The reason is that, in the paper of \citet{blanchard2017machine}, the attacker ``proposes the opposite vector, scaled to a large length'', which is similar to our attack strategy with a large $\epsilon$.

Guided by our theoretical analysis, we designed efficient attack strategies for both \texttt{Median} and \texttt{Krum}. Our results show that the definition of Byzantine tolerance for distributed synchronous SGD should be revised. Using our definition of DSSGD-Byzantine tolerance, research can be conducted to design better defense techniques. 

\section{CONCLUSION}

We propose a revised definition of Byzantine tolerance for distributed synchronous SGD. With the new definition, we theoretically and empirically examine the Byzantine tolerance of two prevailing robust aggregation rules. Guided by our theoretical analysis, attack techniques can be designed to fail the aggregation rules. In the future, we hope new defense techniques can be designed using our revised definition of Byzantine tolerance. 

\newpage
\bibliography{byz}
\bibliographystyle{icml2019}

\end{document}